                        
\documentclass[accepted,mathfont=newtx]{uai2025} 

\usepackage[american]{babel}

\usepackage{natbib} 
    \bibliographystyle{plainnat}
    
\usepackage{mathtools} 
\usepackage{booktabs} 
\usepackage{tikz} 
\usepackage{pifont}
\usepackage{microtype}
\usepackage{graphicx}
\usepackage{accents}
\usepackage{subfigure}
\usepackage{hyperref}

\usepackage{amsmath}

\usepackage{amsthm}
\usepackage{subcaption}
\usepackage[capitalize,noabbrev]{cleveref}
\usepackage[textsize=tiny]{todonotes}
\usepackage{algpseudocode}
\usepackage{algorithm}
\usepackage[]{authblk}


\usepackage{amsmath,amsfonts,bm}









\def\eqref#1{equation~\ref{#1}}









\def\1{\bm{1}}










\DeclareMathAlphabet{\mathsfit}{\encodingdefault}{\sfdefault}{m}{sl}
\SetMathAlphabet{\mathsfit}{bold}{\encodingdefault}{\sfdefault}{bx}{n}

\def\gA{{\mathcal{A}}}
\def\gB{{\mathcal{B}}}

\def\gD{{\mathcal{D}}}

\def\gH{{\mathcal{H}}}

\def\gK{{\mathcal{K}}}
\def\gL{{\mathcal{L}}}

\def\gS{{\mathcal{S}}}
\def\gT{{\mathcal{T}}}

\def\gX{{\mathcal{X}}}
\def\gY{{\mathcal{Y}}}









\newcommand{\E}{\mathbb{E}}

\newcommand{\R}{\mathbb{R}}



\DeclareMathOperator*{\argmax}{arg\,max}

\DeclareMathOperator*{\argmin}{arg\,min}

\usepackage{thm-restate}
\usepackage{url}
\usepackage{adjustbox}

\usepackage{listings}
\usepackage{colortbl}
\usepackage{float}
\usepackage{xspace}
\usepackage{multirow}
\usepackage{enumitem}
\setlist{leftmargin=4mm,itemsep=0pt,topsep=0pt}
\usepackage{selectp}

\makeatletter
\def\blfootnote{\gdef\@thefnmark{}\@footnotetext}
\makeatother

\colorlet{alternateRowColor}{magenta!10}

\iftrue
\newcommand{\enote}[1]{\textcolor{red}{[E: #1]}}
\newcommand{\anote}[1]{\textcolor{blue}{[A: #1]}}
\newcommand{\pnote}[1]{\textcolor{orange}{[P: #1]}}
\else
\newcommand{\enote}[1]{}
\newcommand{\anote}[1]{}
\newcommand{\pnote}[1]{}
\fi

\newcommand{\disloss}{\ell_{\textrm{dis}}}
\newcommand{\logloss}{\ell_{\textrm{logistic}}}

\newcommand{\disdis}{\textsc{Dis}$^2$\xspace}
\newcommand{\odd}{\textsc{ODD}\xspace}

\theoremstyle{plain}
\newtheorem{theorem}{Theorem}[section]

\newtheorem{lemma}[theorem]{Lemma}

\theoremstyle{definition}
\newtheorem{definition}[theorem]{Definition}
\newtheorem{assumption}[theorem]{Assumption}
\theoremstyle{remark}

\begin{document}

\title{ODD: Overlap-aware Estimation of Model Performance under Distribution Shift}

%
%
\author[1]{\href{mailto:<amishr24@jhu.edu>?Subject=ODD paper}{Aayush Mishra}}
\author[1]{Anqi Liu}
\affil[1]{%
    Department of Computer Science, \protect\\
    Johns Hopkins University,
    Baltimore, Maryland, USA
     \protect\\
    \texttt{\{amishr24, aliu.cs\}@jhu.edu}
}
  
\maketitle

\begin{abstract}
Reliable and accurate estimation of the error of an ML model in unseen test domains is an important problem for safe intelligent systems. Prior work uses \textit{disagreement discrepancy} (\disdis) to derive practical error bounds under distribution shifts. It optimizes for a maximally disagreeing classifier on the target domain to bound the error of a given source classifier. Although this approach offers a reliable and competitively accurate estimate of the target error, we identify a problem in this approach which causes the disagreement discrepancy objective to compete in the overlapping region between source and target domains.
With an intuitive assumption that the target disagreement should be no more than the source disagreement in the overlapping region due to high enough support, we devise Overlap-aware Disagreement Discrepancy (\odd). 
Our \odd-based bound uses domain-classifiers to estimate domain-overlap and better predicts target performance than \disdis. We conduct experiments on a wide array of benchmarks to show that our method improves the overall performance-estimation error while remaining valid and reliable. Our code and results are available on \href{https://github.com/aamixsh/odd}{GitHub}.
\end{abstract}

\begin{figure}[t]
    \centering
    \includegraphics[width=\linewidth]{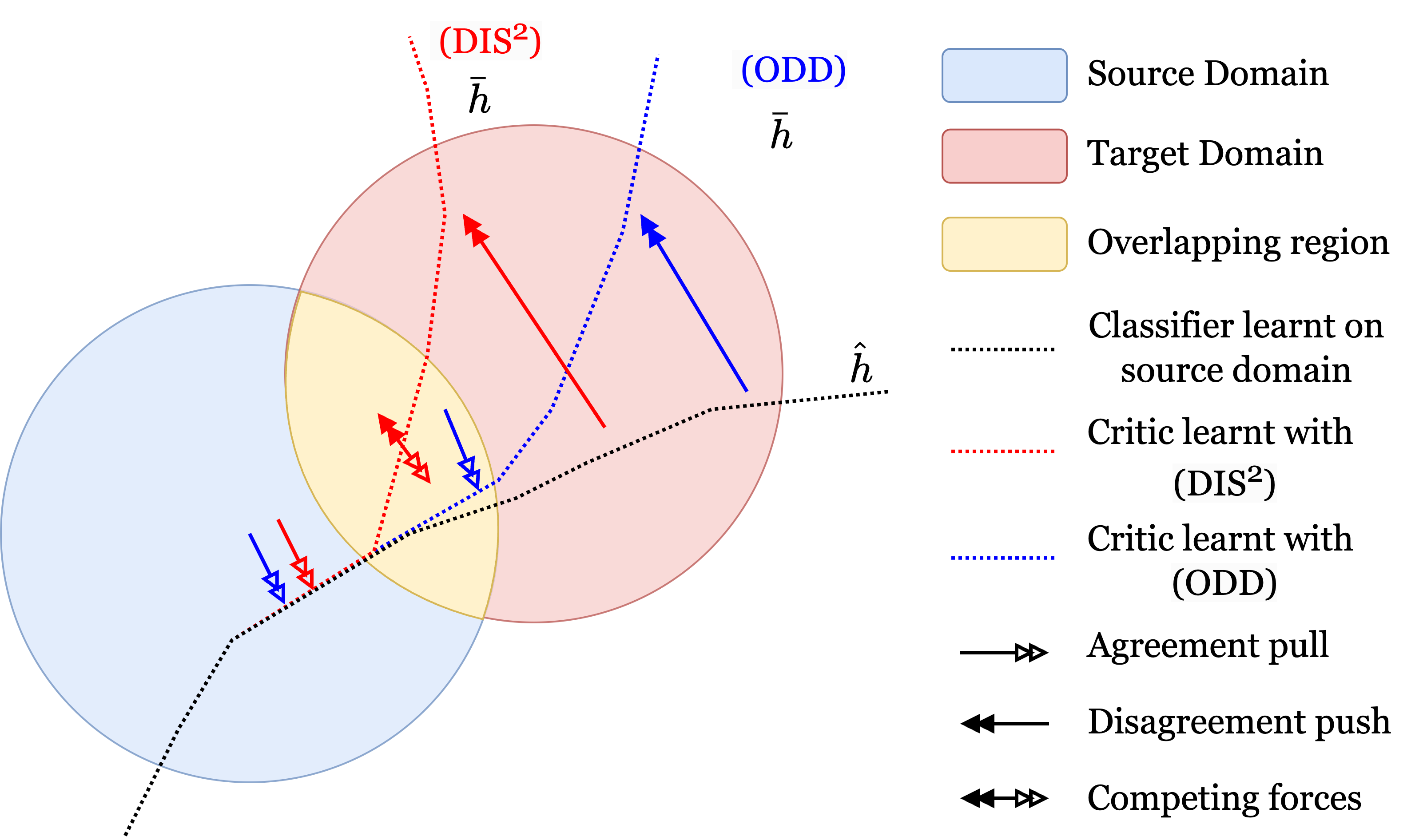}
    \caption{Given a source domain $\gS$, and a classifier $\hat{h}$ trained in this domain, we aim to predict the performance of $\hat{h}$ in a target domain $\gT$ \textit{without labels}. ~\citet{rosenfeld2023almost} learn a worst-case \textit{critic} $\Bar{h}$, which maximally disagrees with $\hat{h}$ in $\gT$ while agreeing with $\hat{h}$ in $\gS$ (\disdis), to bound this performance. But this optimization leads to a competition in the overlapping region. \odd discounts disagreement in the overlapping region making $\Bar{h}$ agree with $\hat{h}$ in this region. We show that \odd tightens the gap between true and predicted performance compared to \disdis, while remaining reliable.}
    \label{fig:fig1}
\end{figure}

\section{Introduction}
\label{sec:intro}
The ability of machine learning models to know when they do not know, is an important characteristic to make them safe for deployment, especially in high-stakes applications like the medical domain. Modern neural networks have an incredible capacity to learn complex functions but they are prone to catastrophic errors on inputs outside of their training distributions. Hence, it is crucial to be able to predict the performance of these models on distributionally shifted test domains.

Many works have attempted to address this problem of test performance prediction~\citep{bendavid2007analysis, bendavid2010theory, mansour2009domain, ben2010impossibility}. These methods provide uniform convergence bounds which are often vacuous in practice because we are interested in bounding the error of a single hypothesis. Other recent works attempt to use unlabeled test samples to make point-wise (per hypothesis) prediction of performance bounds in neural networks~\citep{lu2023predicting, baek2022agreement, garg2022leveraging, guillory2021predicting}. Although these methods achieve low performance prediction error on average, they lack reliability and often overestimate the performance, especially under large distribution shift (when reliability is most important). To address these challenges,~\citet{rosenfeld2023almost} proposed Disagreement Discrepancy (\disdis), which provides almost provable performance bounds using unlabeled test samples. It 
trains a \textit{critic} that minimizes its disagreement with a given classifier on the source distribution while maximizing its disagreement on the target domain under a sufficiently expressive hypothesis class. This allows \disdis to assume a worst-case shift in the target domain while remaining faithful to the trained classifier in the source domain. This approach gives non-vacuous bounds that almost always remain valid. 

In this work, we find that \disdis can be further improved using domain-overlap awareness. The agreement with source and disagreement with target creates a tension in the region of domain overlap, resulting in unstable optimization, which leads to more pessimistic critics than necessary. 
We are motivated by the intuition that if a classifier was trained on labeled samples from the target domain, its support in the overlapping region would be similar to what the source trained classifier had. Hence, we propose to discount the disagreement of the critic with target overlapping samples through a new training objective. 
Leveraging domain classifiers for estimate domain-overlap, our Overlap-aware Disagreement Discrepancy (\odd) removes the instability of optimization in the overlapping region and results in practically tighter performance bounds than \disdis. We visually illustrate our method in~\autoref{fig:fig1}. In summary:
\begin{itemize} 
    \item We derive a general version of the \disdis bound using the notion of ideal joint hypothesis, and show that theoretical performance bounds obtained using \odd are as tight as those obtained using \disdis.
    \item We design an \odd-based disagreement loss to find more \textit{optimistic critics} for practically estimating the performance bounds. 
    \item On several real datasets and training method combinations, \odd-based bounds are tighter than \disdis-based bounds, and still maintain reliable coverage. 
\end{itemize}

\section{The general \disdis bound}
\label{sec:background}

\paragraph{Setup:} We follow the notation setup used by ~\citet{rosenfeld2023almost}. Let $\gS, \gT$ denote the source and target distributions, respectively, over labeled inputs $(x,y) \in \gX \times \gY$, and let $\hat{\gS}$, $\hat\gT$ denote the empirically observed sets (with cardinality $n_S$ and $n_T$) of samples from these distributions. \textit{In the target domain $\gT$, we observe only the covariates and not the labels.} We let $p_{\gS}$ and $p_{\gT}$ denote the marginal distributions of covariates in the source and target domains. We consider classifiers $h:\gX \to \R^{|\gY|}$ which output a vector of logits ($h(x)_y$ denotes the logits of class $y$ in this vector), and let $\hat{h}$ denote the particular classifier (trained on $\hat{\gS}$) under study for which we want to bound the error in the target domain. We use $\gH$ to denote a hypothesis class of such classifiers. 
For a domain $\gD$ on $\gX$, let $\epsilon_{\gD}(h, h') := \E_{x \sim p_\gD}[\mathbf{1}\{\argmax_y h(x)_y \neq \argmax_y h'(x)_y\}]$ denote the one-hot disagreement between any classifiers $h$ and $h'$ on $\gD$. 

\textbf{The labeling function:} Let $y^*_\gS$ and $y^*_\gT$ denote the true labeling functions in the source and target domains, respectively. \citet{rosenfeld2023almost} assume a fixed $y^*$ to represent the true labeling function for all domains. In contrast, we use the notion of ideal joint hypothesis \citep{bendavid2010theory} to analyze \textit{adaptability} in the general case.

\begin{definition}
\label{def:ideal}
The ideal joint hypothesis is defined as:
\begin{align}
\label{eq:ideal}
    y^* := \argmin_{h \in \gH} \epsilon_{\gS}(h, y^*_\gS) + \epsilon_{\gT}(h, y^*_\gT)
\end{align}
\end{definition}
with the corresponding joint risk defined as $\lambda := \epsilon_{\gS}(y^*) + \epsilon_{\gT}(y^*)$. For brevity, we overload $\epsilon_{\gD}(h)$ to mean $\epsilon_{\gD}(h, y^*_\gD)$, i.e. the 0-1 error of classifier $h$ on distribution $\gD$. If $\lambda$ is high, no $h$ trained on the source domain can be expected to perform well on the target domain (low adaptability). 



Now, \citet{rosenfeld2023almost} define \textit{disagreement discrepancy} (\disdis) as follows.

\begin{definition}
    \disdis $\Delta(h, h')$ is the disagreement between any $h$ and $h'$ on $\gT$ minus their disagreement on $\gS$:
    \begin{align}
    \label{eq:dis2}
        \Delta(h, h') := \epsilon_{\gT}(h, h') - \epsilon_{\gS}(h, h').
    \end{align}
\end{definition}

This immediately implies the following lemma:

\begin{lemma}
\label{lemma:error-expansion}
    For any classifier $h$,\\
    $\epsilon_{\gT}(h) \leq \epsilon_{\gS}(h) + \Delta(h, y^*) + \lambda$.
\end{lemma}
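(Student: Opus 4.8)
The plan is to reduce everything to the triangle inequality for the one-hot disagreement $\epsilon_{\gD}(\cdot,\cdot)$ and then unwind the definitions of $\Delta$ and $\lambda$. The key elementary fact I would establish first is that for any fixed $x$ and any three classifiers (or labeling functions) $f,g,k$, the indicator satisfies $\mathbf{1}\{\argmax f(x)\neq\argmax k(x)\}\le \mathbf{1}\{\argmax f(x)\neq\argmax g(x)\}+\mathbf{1}\{\argmax g(x)\neq\argmax k(x)\}$ pointwise; taking expectations over $x\sim p_\gD$ then gives $\epsilon_\gD(f,k)\le\epsilon_\gD(f,g)+\epsilon_\gD(g,k)$. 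This is the only ``inequality'' input; the rest is bookkeeping.

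Next I would apply this on the target domain with the intermediate point $y^*$ (the ideal joint hypothesis of Definition~\ref{def:ideal}):
\begin{align}
\label{eq:lemmaproof-target}
    \epsilon_{\gT}(h) = \epsilon_{\gT}(h,y^*_\gT) \le \epsilon_{\gT}(h,y^*) + \epsilon_{\gT}(y^*,y^*_\gT) = \epsilon_{\gT}(h,y^*) + \epsilon_{\gT}(y^*).
\end{align}
Then I would rewrite the term $\epsilon_{\gT}(h,y^*)$ using the definition of disagreement discrepancy, $\Delta(h,y^*) = \epsilon_{\gT}(h,y^*) - \epsilon_{\gS}(h,y^*)$, i.e.\ $\epsilon_{\gT}(h,y^*) = \Delta(h,y^*) + \epsilon_{\gS}(h,y^*)$, and bound the remaining source term by the triangle inequality again through $y^*_\gS$: $\epsilon_{\gS}(h,y^*)\le\epsilon_{\gS}(h,y^*_\gS)+\epsilon_{\gS}(y^*_\gS,y^*) = \epsilon_{\gS}(h)+\epsilon_{\gS}(y^*)$.

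Substituting these into \eqref{eq:lemmaproof-target} yields
\begin{align}
    \epsilon_{\gT}(h) \le \epsilon_{\gS}(h) + \Delta(h,y^*) + \bigl(\epsilon_{\gS}(y^*) + \epsilon_{\gT}(y^*)\bigr) = \epsilon_{\gS}(h) + \Delta(h,y^*) + \lambda,
\end{align}
using the definition $\lambda := \epsilon_{\gS}(y^*) + \epsilon_{\gT}(y^*)$, which is exactly the claim. I do not expect any real obstacle here: the only subtlety is being careful that $\epsilon_\gD(\cdot,\cdot)$ is genuinely symmetric and satisfies the triangle inequality as a pseudmetric on the induced hard predictions, and that the overloaded notation $\epsilon_\gD(h)=\epsilon_\gD(h,y^*_\gD)$ is applied with the \emph{domain-appropriate} labeling function ($y^*_\gS$ on $\gS$, $y^*_\gT$ on $\gT$) — this is where the generalization over \citet{rosenfeld2023almost} lives, and it is the step most prone to a slip if one conflates $y^*_\gS$ and $y^*_\gT$.
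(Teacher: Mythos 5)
Your proof is correct and follows essentially the same route as the paper's: a triangle inequality on $\gT$ through $y^*$, an add-and-subtract of $\epsilon_{\gS}(h,y^*)$ to expose $\Delta(h,y^*)$, and a second triangle inequality on $\gS$ to produce $\epsilon_{\gS}(h)+\epsilon_{\gS}(y^*)$, with $\lambda=\epsilon_{\gS}(y^*)+\epsilon_{\gT}(y^*)$ closing the argument. Your added care about using the domain-appropriate labeling functions $y^*_\gS$ and $y^*_\gT$ is exactly the point of the paper's generalization and matches its intent.
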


The proof can be found in~\autoref{app:proofs}. This gives us a method to bound the target risk in terms of source risk and the discrepancy term $\Delta$. However, as $y^*$ is unknown, we can optimize for an alternate critic $\Bar{h}$ which would act as a proxy for the worst-case $y^*$. For this, ~\citet{rosenfeld2023almost} assume the following:
\begin{assumption}
    \label{ass:dis2concave}
    For a critic $\Bar{h}\in\gH$ which maximizes a concave surrogate to the empirical \disdis, $\Delta(\hat{h}, y^*) \leq \Delta(\hat{h}, \Bar{h})$.
\end{assumption}
This assumption is based on the practical observation that $y^*$ is not chosen adversarially with respect to $\hat{h}$. So it is reasonable that there exists another function $h^* \in \gH$ having higher disagreement discrepancy than $y^*$.
$\Bar{h}$ is a concave surrogate to $h^*$ that we can find empirically (which approaches $h^*$ with increasing sample size). Finally, we have the error bound:
\begin{theorem}[\disdis Bound]
    \label{thm:dis2bound}
    Under~\autoref{ass:dis2concave}, 
    with probability $\geq 1-\delta$,
    \begin{align*}
        \epsilon_{\gT}(\hat{h}) &\leq \epsilon_{\hat{\gS}}(\hat{h}) + \hat\Delta(\hat{h}, \Bar{h}) + \sqrt{\frac{(n_S + 4n_T) \log \frac{1}{\delta}}{2 n_S n_T}} + \lambda.
    \end{align*}
\end{theorem}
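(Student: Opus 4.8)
The plan is to chain three ingredients: the population-level inequality of \autoref{lemma:error-expansion}, the surrogate step of \autoref{ass:dis2concave} that replaces the unknown $y^*$ by the trainable critic $\Bar h$, and a single concentration inequality that exchanges population disagreements for empirical ones and produces the stated $\sqrt{\cdot}$ slack. Concretely, \autoref{lemma:error-expansion} gives $\epsilon_\gT(\hat h) \le \epsilon_\gS(\hat h) + \Delta(\hat h, y^*) + \lambda$, and \autoref{ass:dis2concave} gives $\Delta(\hat h, y^*) \le \Delta(\hat h, \Bar h)$; unfolding the definition of \disdis, these combine into $\epsilon_\gT(\hat h) \le \epsilon_\gS(\hat h) - \epsilon_\gS(\hat h, \Bar h) + \epsilon_\gT(\hat h, \Bar h) + \lambda$. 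Since $\hat\Delta(\hat h, \Bar h) = \epsilon_{\hat\gT}(\hat h,\Bar h) - \epsilon_{\hat\gS}(\hat h,\Bar h)$, it remains to control $\big(\epsilon_\gS(\hat h) - \epsilon_{\hat\gS}(\hat h)\big) - \big(\epsilon_\gS(\hat h,\Bar h) - \epsilon_{\hat\gS}(\hat h,\Bar h)\big) + \big(\epsilon_\gT(\hat h,\Bar h) - \epsilon_{\hat\gT}(\hat h,\Bar h)\big)$ from above, i.e.\ to show it is at most the claimed radius with probability $\ge 1-\delta$.

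For the concentration step I would merge the two source terms into one bounded summand. For $x \sim p_\gS$, set $Z(x) := \mathbf{1}\{\argmax_y \hat h(x)_y \ne y^*_\gS(x)\} - \mathbf{1}\{\argmax_y \hat h(x)_y \ne \argmax_y \Bar h(x)_y\} \in [-1,1]$, so $\E_{p_\gS}[Z] = \epsilon_\gS(\hat h) - \epsilon_\gS(\hat h,\Bar h)$ and the average of $Z$ over $\hat\gS$ equals $\epsilon_{\hat\gS}(\hat h) - \epsilon_{\hat\gS}(\hat h,\Bar h)$. For $x\sim p_\gT$, set $W(x) := \mathbf{1}\{\argmax_y \hat h(x)_y \ne \argmax_y \Bar h(x)_y\}\in[0,1]$, with mean $\epsilon_\gT(\hat h,\Bar h)$ and empirical average $\epsilon_{\hat\gT}(\hat h,\Bar h)$. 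The quantity above then equals $\E S - S$ for $S := \frac1{n_S}\sum_{i=1}^{n_S} Z(x_i) + \frac1{n_T}\sum_{j=1}^{n_T} W(x'_j)$, a function of $n_S+n_T$ independent draws in which changing one source point perturbs $S$ by at most $2/n_S$ and one target point by at most $1/n_T$. A one-sided bounded-differences (McDiarmid) inequality — equivalently Hoeffding applied to the reweighted independent summands — gives $\Pr[\E S - S \ge t] \le \exp\!\big(-2t^2 / (4/n_S + 1/n_T)\big)$. Equating the right side with $\delta$ and using $4/n_S + 1/n_T = (n_S + 4n_T)/(n_S n_T)$ yields exactly $t = \sqrt{(n_S+4n_T)\log(1/\delta)/(2 n_S n_T)}$, and substituting back closes the argument.

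The step I expect to be delicate is the legitimacy of treating $Z$ and $W$ as \emph{fixed} functions when applying the concentration bound: $\Bar h$ is obtained by optimizing over the same data, so a fully rigorous argument needs either a train/evaluation split for computing $\hat\Delta(\hat h,\Bar h)$, or a uniform-convergence argument over (a cover of) $\gH$ folded into the failure probability. The reading used here leans on the fact that $\Bar h$ maximizes only a \emph{concave surrogate} to the empirical \disdis — not the $0$-$1$ disagreement itself — which is what makes the ``effectively data-independent $\Bar h$'' treatment defensible and keeps the slack at the parametric $O(\sqrt{\log(1/\delta)/n})$ rate rather than an $\gH$-complexity rate. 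Two minor bookkeeping points: $\epsilon_\gD(h)$ abbreviates $\epsilon_\gD(h, y^*_\gD)$ per the setup, and $\lambda$ is a population constant carried through all three steps untouched, so it needs no concentration treatment.
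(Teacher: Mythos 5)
Your proposal is correct and follows essentially the same route as the paper's proof: chain \autoref{lemma:error-expansion} with \autoref{ass:dis2concave}, merge the two source indicators into a single $[-1,1]$-valued summand and the target indicator into a $[0,1]$-valued one, and apply a single Hoeffding/bounded-differences step whose variance proxy $4/n_S + 1/n_T = (n_S+4n_T)/(n_S n_T)$ yields exactly the stated radius --- this is precisely the paper's construction via the random variables $r_{\gS,i}$ and $r_{\gT,i}$. Your closing caveat about the data-dependence of $\Bar h$ is a fair observation, but the paper's proof carries the same implicit treatment, so it is not a discrepancy between your argument and theirs.
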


Here, $\epsilon_{\hat{\gS}}$ and $\hat{\Delta}$ denote the empirical versions of their corresponding population terms. The proof can be found in~\autoref{app:proofs}. Even with a suitable $\Bar{h}$, calculating $\lambda$ still requires access to target labels, which we do not have. Hence, we follow~\citet{rosenfeld2023almost} and fix a common $y^*$ across domains (as it happens in most practical cases). When $y^* = y^*_\gS = y^*_\gT$, $\lambda = 0$ and the above bound collapses exactly to their bound. In the next section, we make \disdis more effective with overlap awareness.


\section{\odd: Overlap-aware \disdis}
\label{sec:odd}

\textbf{The problem with disagreement discrepancy:} Maximizing \disdis with respect to a given classifier $\hat{h}$ loosely translates to finding a \textit{critic} $\Bar{h}$ which maximally disagrees with $\hat{h}$ in the target domain (first term) while maximally agreeing with $\hat{h}$ in the source domain (second term). 
This works as expected in the case when the source and target domains are disjoint. However, there is often an overlap between source and target domains in practice. Maximizing \disdis creates a competition between the two terms in the region of \textit{domain overlap}. In this region, the maximizing hypothesis has to both agree and disagree with $\hat{h}$. Hence, by definition, the \disdis objective is unstable in the overlapping region, where both agreement and disagreement can result in the same discrepancy value. This presents a problem during practical optimization and can result in over-conservative critics which disagree with $\hat{h}$ more than necessary. 

\begin{figure}[t]
    \centering
    \includegraphics[width=0.9\linewidth]{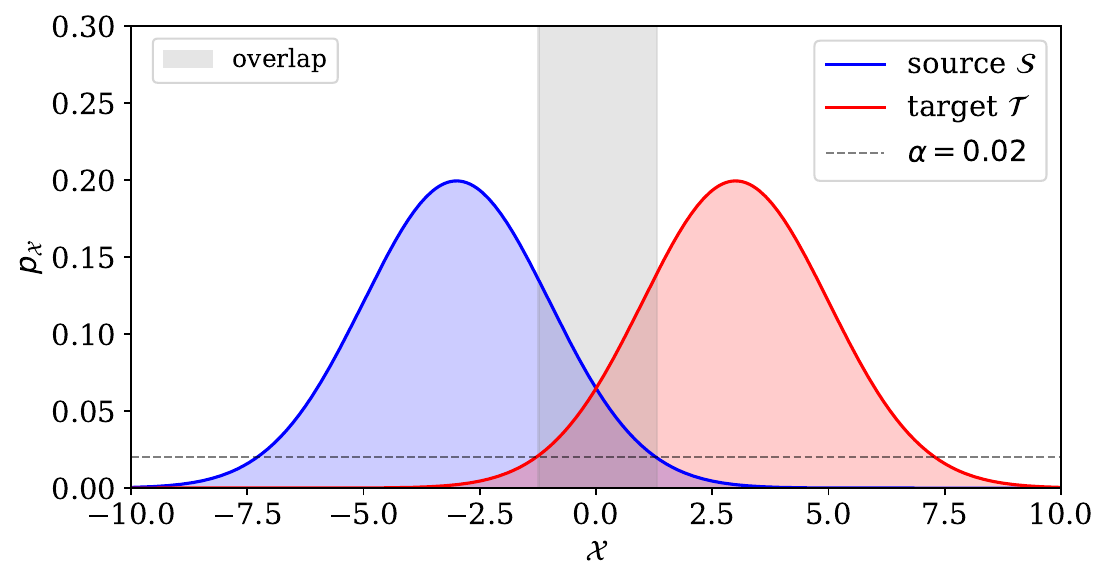}
    \caption{For source ($\gS$) and target ($\gT$) domains with $p_\gS = \mathcal{N}(-3, 2)$ and $p_\gT = \mathcal{N}(3, 2)$ respectively, the domain overlapping region using $\alpha = 0.02$ is shown.}
    \label{fig:overlap}
\end{figure}

\textbf{Domain overlap:} To tackle this problem, we first define domain overlap using a parameter $\alpha$, which quantifies the minimum density requirement in 
both domains of interest for an element to be considered inside the overlapping region. 

\begin{definition}
    \label{def:overlap} The \textit{domain overlap} set between $\gS$ and $\gT$ 
    is defined as $\gD_\alpha = \{x \in \gX: p_\gS(x) > \alpha, p_\gT(x) > \alpha\}$ for some $\alpha > 0$.
\end{definition}
Probability distributions (like the normal distribution), may have non-zero support everywhere, and $\alpha = 0$ makes the entire domain space part of the overlapping region. Practically, an overlap is meaningful only in a region with reasonably high support in domains of interest (see~\autoref{fig:overlap}). 

Now, let $\gK$ denote $[\mathbf{1}\{\argmax_y h(x)_y \neq \argmax_y h'(x)_y\}]$. Then, $\epsilon_{\gD}(h, h') := \E_{x \sim p_\gD}\gK = \int_{x \in \gD}p_\gD(x)\gK dx$. This explicit definition of expectation as an integral allows us to think of the density $p_\gD$ and domain of application $\gD$ independently. If the domain $\gD$ is split into two subsets, say $\gA$ and $\gB$ such that $\gA \cap \gB = \phi$ and $\gA \cup \gB = \gD$, then $\epsilon_{\gD}(h, h') = \int_{x \in \gA}p_\gD(x)\gK dx + \int_{x \in \gB}p_\gD(x)\gK dx$.
We use this separability under the same density as a property to define $\epsilon^\gD_\gA(h, h') = \int_{x \in \gA}p_\gD(x)\gK dx$. Using this definition, we have:
\begin{align*}
    \epsilon_{\gT}(h, h') := \underbrace{\epsilon^\gT_{\gD_\alpha}(h, h')}_\text{Overlap Disagreement} + \underbrace{\epsilon^\gT_{\gT \setminus \gD_\alpha}(h, h')}_\text{Non-Overlap Disagreement}
\end{align*}

This breakdown of disagreement in the overlapping and non-overlapping portions allows us rethink the \disdis objective.

\textbf{Overlap-aware Disagreement Discrepancy (\odd):}
As $\hat{h}$ is trained to minimize the risk in $\gS$, its disagreement with $y^*$ (=$y^*_\gS$) in regions with high support is expected to be low. This is why the \disdis objective aims to minimize the disagreement between the critic $\Bar{h}$ and $\hat{h}$ in the source domain. However, \disdis also tries to maximize the disagreement in $\gD_\alpha$, which is a part of the same high source support region. We address this issue by making \disdis overlap aware. 

\begin{definition}
\label{def:odd}
    The \textit{\textbf{O}verlap-aware \textbf{D}isagreement \textbf{D}iscrepancy (\odd)}
    is defined as the disagreement between any $h$ and $h'$ in the \textit{non-overlapping} region of $\gT$ minus their disagreement in the \textit{non-overlapping} region of $\gS$:
    \begin{align}
    \label{eq:odd}
        \Delta(h, h', \alpha) &:= 
        \epsilon^\gT_{\gT \setminus \gD_\alpha}(h, h') - \epsilon^\gS_{\gS \setminus \gD_\alpha}(h, h').
    \end{align}
\end{definition}

We also have the corresponding \textit{overlap discrepancy}:

\begin{definition}
\label{def:overlap_dis}
$\underline{\Delta}(h, h', \alpha) = \epsilon^\gT_{\gD_\alpha}(h, h') - \epsilon^\gS_{\gD_\alpha}(h, h')$.
\end{definition}


Note that, $\Delta(h, h')$ (\disdis) = $\Delta(h, h', \alpha)$ (\odd) $+ \underline{\Delta}(h, h', \alpha)$. 
With our separation of overlapping and non-overlapping discrepancies, we can rewrite~\autoref{thm:dis2bound} as:

\begin{theorem}
    \label{thm:splitbound}
    Under~\autoref{ass:dis2concave}, 
    with probability $\geq 1-\delta$,
    \begin{align*}
        \epsilon_{\gT}(\hat{h}) &\leq \epsilon_{\hat{\gS}}(\hat{h}) + \hat\Delta(\hat{h}, \Bar{h}, \alpha) + \underline{\hat\Delta}(\hat{h}, \Bar{h}, \alpha) + \\
        &\quad\sqrt{\frac{(n_S + 4n_T) \log \frac{1}{\delta}}{2 n_S n_T}} + \lambda.
    \end{align*}
\end{theorem}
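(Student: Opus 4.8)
The plan is to obtain \autoref{thm:splitbound} as an immediate consequence of \autoref{thm:dis2bound} together with the exact additive decomposition of the disagreement discrepancy into overlap and non-overlap parts. The one identity doing the work is the relation noted just after \autoref{def:overlap_dis}, namely $\Delta(h, h') = \Delta(h, h', \alpha) + \underline{\Delta}(h, h', \alpha)$. First I would verify this at the population level: because $\gD_\alpha$ and $\gX \setminus \gD_\alpha$ partition the input space, additivity of the integral gives
\[
    \epsilon_{\gT}(h,h') = \epsilon^{\gT}_{\gD_\alpha}(h,h') + \epsilon^{\gT}_{\gT\setminus\gD_\alpha}(h,h'),
\]
and similarly for $\gS$; subtracting these two equalities and regrouping the two overlap terms separately from the two non-overlap terms yields exactly $\Delta(h,h') = \Delta(h,h',\alpha) + \underline{\Delta}(h,h',\alpha)$.

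Next I would transfer this to the empirical quantities. Each of $\hat\Delta$, $\hat\Delta(\cdot,\cdot,\alpha)$, and $\underline{\hat\Delta}(\cdot,\cdot,\alpha)$ is a finite-sample average of the disagreement indicator $\mathbf 1\{\argmax_y \hat h(x)_y \neq \argmax_y \Bar{h}(x)_y\}$ over $\hat\gS$ and $\hat\gT$; splitting each sample set according to membership in $\gD_\alpha$ splits each average additively, so $\hat\Delta(\hat h, \Bar{h}) = \hat\Delta(\hat h, \Bar{h}, \alpha) + \underline{\hat\Delta}(\hat h, \Bar{h}, \alpha)$ holds identically. Substituting this into the right-hand side of \autoref{thm:dis2bound} reproduces the right-hand side of \autoref{thm:splitbound} verbatim, with the same concentration term and the same additive $\lambda$, and the claim follows.

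I do not anticipate a real obstacle, but two points deserve care. First, the concentration term is unchanged: the split of $\hat\Delta$ is a purely algebraic rewriting that introduces no new random quantities, so we invoke exactly the same high-probability event as in the proof of \autoref{thm:dis2bound} (simultaneous control of $\epsilon_{\hat\gS}(\hat h)$ against $\epsilon_{\gS}(\hat h)$ and of $\hat\Delta(\hat h, \Bar{h})$ against $\Delta(\hat h, \Bar{h})$), with no extra union bound over the partition. Second, for this statement $\gD_\alpha$ is treated as a fixed set determined by $p_\gS$, $p_\gT$, and $\alpha$ (\autoref{def:overlap}); the practical estimation of $\gD_\alpha$ via domain classifiers is a downstream approximation that does not enter here. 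With those remarks in place, the proof is essentially a one-line substitution into \autoref{thm:dis2bound} under \autoref{ass:dis2concave}.
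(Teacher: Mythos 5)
Your proposal is correct and matches the paper's own argument: the appendix proof of this theorem likewise invokes the identity $\Delta(h,h') = \Delta(h,h',\alpha) + \underline{\Delta}(h,h',\alpha)$, splits the random variables from the proof of \autoref{thm:dis2bound} into the overlapping and non-overlapping regions, and notes that the Hoeffding step is unchanged. Your framing of it as a purely algebraic substitution into \autoref{thm:dis2bound} with no new union bound is exactly the intended reading.
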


Remember, $\bar{h}$ is the critic from~\autoref{ass:dis2concave}. This is the most general form of our analysis, where terms with $(\hspace{1pt}\hat{}\hspace{1pt})$ denote the empirical counterparts of their corresponding population terms. As we increase $\alpha$, we decrease the size of $\gD_\alpha$. Starting from $\alpha = 0$ when $ \gD_\alpha = \gS \cup \gT$, until we reach some high $\alpha = \alpha_0$ for which $\gD_\alpha = \phi$. The notion of overlap becomes interesting for some intermediate $\alpha$ values, for which samples from either domain behave similarly on learning algorithms. Next, we discuss why the $\underline{\Delta}$ term is expected to be small due to domain overlap.

\textbf{Overlap Discrepancy between $\hat{h}$ and $y^*$:} Of all $h\in\gH$, we usually only care to examine ones which are trained to achieve low source risk, i.e. $\hat{h}$ typically has low disagreement with $y^*$ in regions of high support. For a reasonably chosen $\gD_\alpha$, we have similarly high support from the target domain. A classifier trained with ground truth target labels should be expected to have similar disagreement in this region. In fact, there is no reason to expect $\epsilon^\gT_{\gD_\alpha}(\hat{h}, y^*)$ to be any higher than $\epsilon^\gS_{\gD_\alpha}(\hat{h}, y^*)$. Hence, we make the following assumption:


\begin{assumption}
\label{ass:overlap}
    Target disagreement between $\hat{h}$ and $y^*$ is bounded by the source disagreement in the region of domain overlap, i.e., $\epsilon^\gT_{\gD_\alpha}(\hat{h}, y^*) \leq \epsilon^\gS_{\gD_\alpha}(\hat{h}, y^*)$.
\end{assumption}

In our experiments with randomly generated datasets(~\autoref{fig:synthetic_main}), we show how the overlap discrepancy $(\epsilon^\gT_{\gD_\alpha}(\hat{h}, y^*) - \epsilon^\gS_{\gD_\alpha}(\hat{h}, y^*))$ consistently remains close to zero, supporting our intuition. With~\autoref{ass:overlap} and a common $y^*$ across domains we can derive a practically tighter bound by maximizing \odd. We change~\autoref{ass:dis2concave} to apply only in the non-overlapping region:

\begin{assumption}
    \label{ass:oddconcave}
    For an $\Bar{h}\in\gH$ which maximizes a concave surrogate to the empirical \odd, $\Delta(\hat{h}, y^*, \alpha) \leq \Delta(\hat{h}, \Bar{h}, \alpha)$.
\end{assumption}
Hence,~\autoref{thm:splitbound} is simplified to: 

\fbox{\parbox{\linewidth}{\begin{theorem}[\odd Bound]
    \label{thm:oddbound}
    Under~\autoref{ass:oddconcave} and~\autoref{ass:overlap},
    with probability $\geq 1-\delta$,
    \begin{align*}
        \epsilon_{\gT}(\hat{h}) &\leq \epsilon_{\hat{\gS}}(\hat{h}) + \hat\Delta(\hat{h}, \Bar{h}, \alpha) +
        \sqrt{\frac{(n_S + 4n_T) \log \frac{1}{\delta}}{2 n_S n_T}}
    \end{align*}
\end{theorem}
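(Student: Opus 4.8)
The plan is to re-run the derivation behind \autoref{thm:dis2bound}, starting from \autoref{lemma:error-expansion}, but with \autoref{ass:dis2concave} replaced by \autoref{ass:oddconcave} together with \autoref{ass:overlap}, and with the \odd decomposition in place of the plain \disdis; here $\Bar{h}$ is the critic furnished by \autoref{ass:oddconcave}. As in the discussion preceding the theorem, we take a common labeling function $y^* = y^*_\gS = y^*_\gT$, so that the adaptability term satisfies $\lambda = 0$. Instantiating \autoref{lemma:error-expansion} at $\hat{h}$ then gives $\epsilon_\gT(\hat{h}) \le \epsilon_\gS(\hat{h}) + \Delta(\hat{h}, y^*)$.

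Next I would split this discrepancy with the identity recorded just after \autoref{def:overlap_dis}, $\Delta(\hat{h}, y^*) = \Delta(\hat{h}, y^*, \alpha) + \underline{\Delta}(\hat{h}, y^*, \alpha)$, and control the two pieces separately. The overlap piece $\underline{\Delta}(\hat{h}, y^*, \alpha) = \epsilon^\gT_{\gD_\alpha}(\hat{h}, y^*) - \epsilon^\gS_{\gD_\alpha}(\hat{h}, y^*)$ is nonpositive by \autoref{ass:overlap}, so it can simply be discarded. The non-overlap piece is bounded by \autoref{ass:oddconcave}: $\Delta(\hat{h}, y^*, \alpha) \le \Delta(\hat{h}, \Bar{h}, \alpha)$. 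Chaining these yields the population inequality $\epsilon_\gT(\hat{h}) \le \epsilon_\gS(\hat{h}) + \Delta(\hat{h}, \Bar{h}, \alpha)$. Note that, in contrast to the route through \autoref{thm:splitbound}, the empirical overlap discrepancy $\underline{\hat\Delta}(\hat{h}, \Bar{h}, \alpha)$ never enters the argument.

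It then remains to pass from the population quantities $\epsilon_\gS(\hat{h})$ and $\Delta(\hat{h}, \Bar{h}, \alpha) = \epsilon^\gT_{\gT\setminus\gD_\alpha}(\hat{h}, \Bar{h}) - \epsilon^\gS_{\gS\setminus\gD_\alpha}(\hat{h}, \Bar{h})$ to their empirical averages over the $n_S$ source and $n_T$ target samples. Treating $\gD_\alpha$ as fixed by $\alpha$ and the marginals, so that $\mathbf{1}\{x \in \gD_\alpha\}$ is a deterministic function of $x$, the statistic $\epsilon_{\hat\gS}(\hat{h}) + \hat\Delta(\hat{h}, \Bar{h}, \alpha)$ satisfies a bounded-differences property: replacing one source point changes it by at most $2/n_S$ (it occurs both in the source-error term and in the source non-overlap disagreement term, each with weight $1/n_S$), and replacing one target point changes it by at most $1/n_T$. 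A one-sided McDiarmid inequality with $n_S (2/n_S)^2 + n_T (1/n_T)^2 = 4/n_S + 1/n_T$ gives, with probability at least $1-\delta$, that $\epsilon_\gS(\hat{h}) + \Delta(\hat{h}, \Bar{h}, \alpha) \le \epsilon_{\hat\gS}(\hat{h}) + \hat\Delta(\hat{h}, \Bar{h}, \alpha) + \sqrt{(n_S + 4 n_T)\log(1/\delta)/(2 n_S n_T)}$, which combined with the population bound above is exactly the claim.

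The step I expect to demand the most care is this concentration step, since $\hat{h}$ depends on the source sample and the critic $\Bar{h}$ on the target covariates, so the statistic is not literally an average of independent terms once $\Bar{h}$ is pinned down after the fact. This is precisely the subtlety already handled in the proof of \autoref{thm:dis2bound} in \citet{rosenfeld2023almost}; because the \odd disagreements differ from the \disdis disagreements only through the fixed multiplicative indicators $\mathbf{1}\{x \notin \gD_\alpha\}$, the same device carries over unchanged, and restricting to $\gS \setminus \gD_\alpha$ and $\gT \setminus \gD_\alpha$ can only shrink the relevant ranges, so no term of the deviation bound grows. (Any additional error incurred when $\gD_\alpha$ is itself estimated — e.g.\ with domain classifiers — is outside the scope of this statement, which treats the overlap set as given.)
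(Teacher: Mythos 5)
Your proof is correct, and it reaches the bound by the same overall strategy as the paper: decompose the discrepancy into overlap and non-overlap pieces, discard the overlap piece via \autoref{ass:overlap}, pass to the critic on the non-overlap piece via \autoref{ass:oddconcave}, and then apply a Hoeffding/McDiarmid-type concentration step with per-source range $2/n_S$ and per-target range $1/n_T$, which yields exactly the stated $\sqrt{(n_S+4n_T)\log(1/\delta)/(2n_Sn_T)}$ term. There is, however, one genuine difference in how the assumptions are deployed, and your ordering is the cleaner one. The paper's appendix proof first writes $\epsilon_{\gT}(\hat h)\le\epsilon_{\gS}(\hat h)+\Delta(\hat h,\Bar h)$ --- a step that requires $\Delta(\hat h,y^*)\le\Delta(\hat h,\Bar h)$, i.e.\ \autoref{ass:dis2concave}, which is not among the hypotheses of \autoref{thm:oddbound} --- and then discards the term $\underline{\Delta}(\hat h,\Bar h,\alpha)$ by citing \autoref{ass:overlap}, even though that assumption is stated for the pair $(\hat h, y^*)$, not for the critic $\Bar h$. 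You instead split $\Delta(\hat h,y^*)=\Delta(\hat h,y^*,\alpha)+\underline{\Delta}(\hat h,y^*,\alpha)$ at the level of $y^*$, drop $\underline{\Delta}(\hat h,y^*,\alpha)\le 0$ exactly where \autoref{ass:overlap} licenses it, and only then invoke \autoref{ass:oddconcave} to replace $\Delta(\hat h,y^*,\alpha)$ by $\Delta(\hat h,\Bar h,\alpha)$; nothing about the critic's behavior on $\gD_\alpha$ is ever needed. This uses precisely the two hypotheses listed in the theorem and nothing more, so your route is the one that actually matches the statement as written. Your explicit caveats --- that $\gD_\alpha$ is treated as fixed, and that the dependence of $\Bar h$ on the samples is inherited from the concentration device of \autoref{thm:dis2bound} in \citet{rosenfeld2023almost} --- are the same ones the paper tacitly relies on.
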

}}

We discuss this derivation in~\autoref{app:proofs}. Note that this bound does not theoretically improve upon the \disdis based bound from~\citep{rosenfeld2023almost} for a given critic $\Bar{h}$. However, optimizing for \odd allows for the selection of a better critic, as we will see in the next section, which agrees more with $\hat{h}$ in the overlapping source domain. This better critic results in the \odd term being lower than the original \disdis term, making the bound practically tighter.



\section{Maximizing \odd}
\label{sec:algo}
\textbf{Competition in the overlapping region:} The \textit{disagreement logistic loss} of a classifier $h$ on a labelled sample $(x, y)$ is described in \citep{rosenfeld2023almost} as:

\resizebox{\linewidth}{!}{%
\begin{minipage}{\linewidth}
\begin{align*}
    \disloss(h, x, y) := \frac{1}{\log 2} \log\left( 1 + \exp\left( h(x)_y
    - \frac{1}{|\gY|-1} \sum_{\hat y\neq y} h(x)_{\hat y} \right)\right).
\end{align*}
\end{minipage}%
}

$\disloss$ is shown to be convex in $h(x)$ and to upper bound the 0-1 disagreement loss. To maximize the empirical disagreement discrepancy, a combined loss on source and target samples is used:
\begin{align*}
    \hat\gL_\Delta(\Bar{h}) := &\ \frac{1}{|\hat{\gS}|} \sum_{x\in\hat{\gS}} \logloss(\Bar{h}, x, \hat{h}(x)) \\
    &+ \frac{1}{|\hat\gT|} \sum_{x\in\hat\gT} \disloss(\Bar{h}, x, \hat{h}(x)).
\end{align*}

Here, $\logloss := -\frac{1}{\log |\gY|} \log \textrm{softmax}(h(x))_y$ is the standard log-loss. This is where we see the competition: for points that lie in the overlapping region, one would contribute by reducing $\logloss$, making $\Bar{h}$ agree with $\hat{h}$ while the other would contribute by reducing $\disloss$, making $\Bar{h}$ disagree with $\hat{h}$. In practice, this typically results in an $\Bar{h}$ that agrees with $\hat{h}$ a little less than it could have in the overlapping region. We illustrate this phenomena in~\autoref{fig:synthetic_main} and show how our proposed method improves on it, which we describe next. 

\textbf{Mitigating the competition:} We propose to improve this loss by discounting the disagreement of target samples in the overlapping region, i.e.,
\begin{align*}
    \hat\gL_{\Delta}(\Bar{h}, \alpha) := &\ \frac{1}{|\hat{\gS}|} \sum_{x\in\hat{\gS}} \logloss(\Bar{h}, x, \hat{h}(x)) \\
    &+ \frac{1}{|\hat\gT|} \sum_{x\in\hat\gT} \mathbf{1}\left\{x \notin (\hat{\gD_\alpha})\right\}*\disloss(\Bar{h}, x, \hat{h}(x)).
\end{align*}

Here, $\mathbf{1}\left\{x \notin (\hat{\gD_\alpha})\right\}$ is an indicator function which only selects samples that are outside the overlapping set. For samples in the overlapping set, the $\disloss$ term is discounted to zero, encouraging the maximizing $\Bar{h}$ to not disagree with $\hat{h}$ in this region. Note that this loss still maximizes the empirical \odd, as the disagreement in the non-overlapping target region is still maximized and the disagreement in the non-overlapping source region is still minimized. 

\textbf{How to select $\gD_\alpha$?} The ideal way to find the overlap set $\gD_\alpha$ is to find the densities $p_\gS$ and $p_\gT$, and tune for the smallest $\alpha$ on a held-out validation set which satisfies~\autoref{ass:overlap}. However, density estimation is typically difficult with high-dimensional data and accessing a held-out set may be challenging in many domains where data is scarce (where reliable performance estimation is the most essential). We propose to do it with a simple method of domain classification. If there is overlap between the source and target domains, a domain classifier should have difficulty in classifying samples from this overlapping region. As we already know domain labels, we use the domain classifier's prediction as a proxy to quantify the extent of overlap. 

Suppose a domain classifier $d: \gX \rightarrow \{0, 1\}$ is trained to classify all source samples with the label $0$ and all target samples with the label $1$. It produces logits $\{d(x)_0, d(x)_1\}$ for all samples and taking the $\argmax$ gives us the classified domain. We can replace the indicator function $\mathbf{1}\left\{x \notin (\hat{\gD_\alpha})\right\}$ with $\mathbf{1}\left\{\argmax(d(x)) \neq 1)\right\}$ to find target domain samples which are being classified as source samples, and hence have an estimate of the set $\gD_\alpha$. Note that this assumes our domain classifier model is accurate.  
In practice, we can convert this problem of hard selection of the set $\gD_\alpha$ into a soft version, by simply weighting each sample in the target domain by its probability of being classified in the target domain. Let $\{s(x)_0, s(x)_1\}$ denote the soft-max of $\{d(x)_0, d(x)_1\}$. Then, the indicator function $\mathbf{1}\left\{x \notin (\hat{\gD_\alpha})\right\}$ can be replaced with $s(x)_1$. Although the soft-max probabilities are not guaranteed to be calibrated without the use of a held-out calibration set, this soft version makes the discounting of overlapping target samples smooth and it works well in practice. Hence, we use the following loss to maximize \odd while making $\Bar{h}$ agree with $\hat{h}$ in the overlapping region:
\begin{align*}
    \hat\gL_{\Delta}(\Bar{h}, \alpha) := &\ \frac{1}{|\hat{\gS}|} \sum_{x\in\hat{\gS}} \logloss(\Bar{h}, x, \hat{h}(x)) \\
    &+ \frac{1}{|\hat\gT|} \sum_{x\in\hat\gT} s(x)_1 *\disloss(\Bar{h}, x, \hat{h}(x)).
\end{align*}
We describe our method step by step in Algorithm~\ref{algo:odd_weight}.

\textbf{Why not also discount source overlapping samples?} 
We want $\Bar{h}$ to agree with $\hat{h}$ in this region, which includes both source and target samples. Discounting loss from both overlapping source samples and overlapping target samples would lead to a problem similar to that of $\hat\gL_\Delta(\Bar{h})$, where instead of competition, the optimization would be indifferent to all the samples in the overlapping region. In~\autoref{app:expdetails}, we conduct experiments to validate this and discuss how this may lead to worse critics.

\begin{algorithm}[tb]
\caption{Finding the critic $\Bar{h}$ for a given $\hat{h}$}
\label{algo:odd_weight}
\begin{algorithmic}[0]
    \State {\bfseries Input:} $\hat{h}, \hat{\gS} = \{(x^\gS_i, y^*_i)\}_{i=1}^{n_\gS}, \hat{\gT} = \{(x^\gT_i)\}_{i=1}^{n_\gT}$
    \State $d \gets \texttt{train}(\{(x^\gS_i, 0)\}_{i=1}^{n_\gS} \cup \{(x^\gT_i, 1)\}_{i=1}^{n_\gT})$ \\\Comment{Train domain classifier.}
    \State $s \gets \texttt{softmax}(d)$
    \State $p^{\hat{h}} \gets \argmax \hat{h}$ \Comment{Returns the $\hat{h}$ predicted label.}
    \State $\Bar{h} \gets \hat{h}$ \Comment{Initialize critic with $\hat{h}$ parameters.}
    \State $\Theta \in \Bar{h} \gets \text{require grad}$ \Comment{Choose parameters to learn.}
    \While{not converged}
        \State $\gS_{\text{loss}} \gets \texttt{mean}(\{\logloss((x^\gS_i, p^{\hat{h}}(x^\gS_i)))\}_{i=1}^{n_\gS})$
        \State $\gT_{\text{loss}} \gets \texttt{mean}(\{s(x^\gT_i)_1 * \disloss((x^\gT_i, p^{\hat{h}}(x^\gT_i)))\}_{i=1}^{n_\gT})$
        \State $\Theta'_t \gets \Theta'_{t-1} - \frac{\partial (\gS_{\text{loss}} + \gT_{\text{loss}})}{\partial \Theta}$ \Comment{Update $\Bar{h}$ parameters.}
    \EndWhile
    \State return $\Bar{h}$
\end{algorithmic}
\end{algorithm}

\section{Experimental Results}
\label{sec:exp}

\subsection{Synthetic data experiments}
\label{subsec:synthetic_exps}
\textbf{Dataset:} To visually illustrate the benefit of our overlap-aware disagreement discrepancy, we conduct experiments on 2-D synthetic data. First, we sample source and target domains from 2 randomly initialized Gaussian distributions with a randomly sampled overlap factor, which determines how close their means are. Then, we initialize a complex decision surface $y^*$ which determines the labels for each sample (with some noise). One sample dataset can be seen in~\autoref{fig:synthetic}. Then, we train a model $\hat{h}$ on the source data. Finally, we use Algorithm~\ref{algo:odd_weight} to find $\Bar{h}$ to obtain a bound on the target performance (accuracy). For comparison and analysis, we also use \disdis to obtain the same bound. The details of the dataset generation can be found in~\autoref{app:expdetails}.

\begin{figure}[t]
    \centering
    \includegraphics[width=0.85\linewidth]{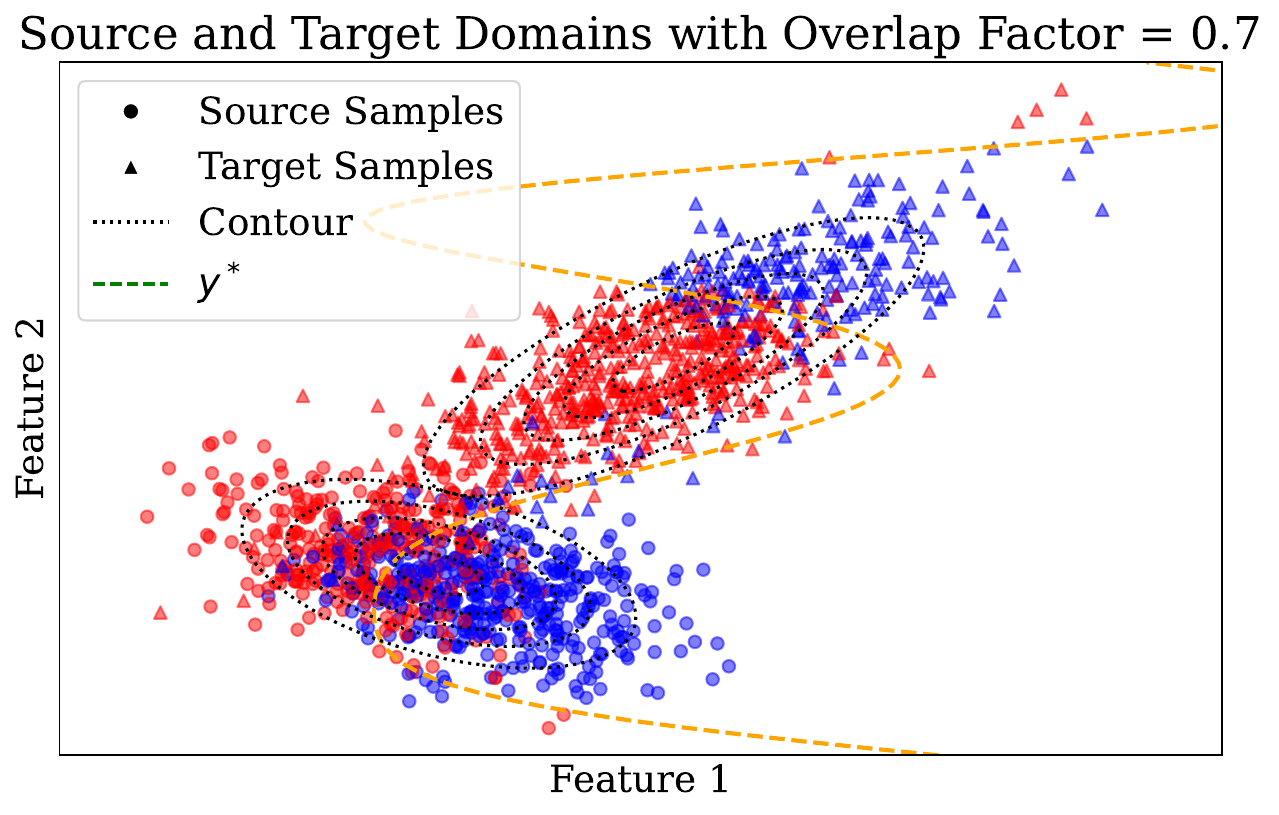}
    \caption{Sample 2-D synthetic dataset used for analysis. With a complex $y^*$, the model learned on source data can make errors in the target domain, but the overlapping region has high agreement.}
    \label{fig:synthetic}
\end{figure}

\textbf{Methodology:} We randomly sample an overlap rate $100$ times between $[0, 1]$ and generate a new dataset for each (around $2000$ training samples and $1250$ validation samples in each domain). We repeat the experiment $40$ times, effectively creating $4000$ unique datasets, to smoothen any finite sampling issues. We use a small $3-$layer, $16-$neurons wide MLP to train the $\hat{h}, \Bar{h}$ and $d$. Finally, we distribute the $4000$ overlap rates into $20$ equal width bins to average our findings across multiple datasets. Instead of predicting the error bound (as in~\autoref{thm:oddbound}), we predict the accuracy bound to follow the convention in~\citet{rosenfeld2023almost}. Hence, a tighter bound implies a larger lower bound in accuracy.

\begin{figure}[t]
    \centering
    \includegraphics[width=\linewidth]{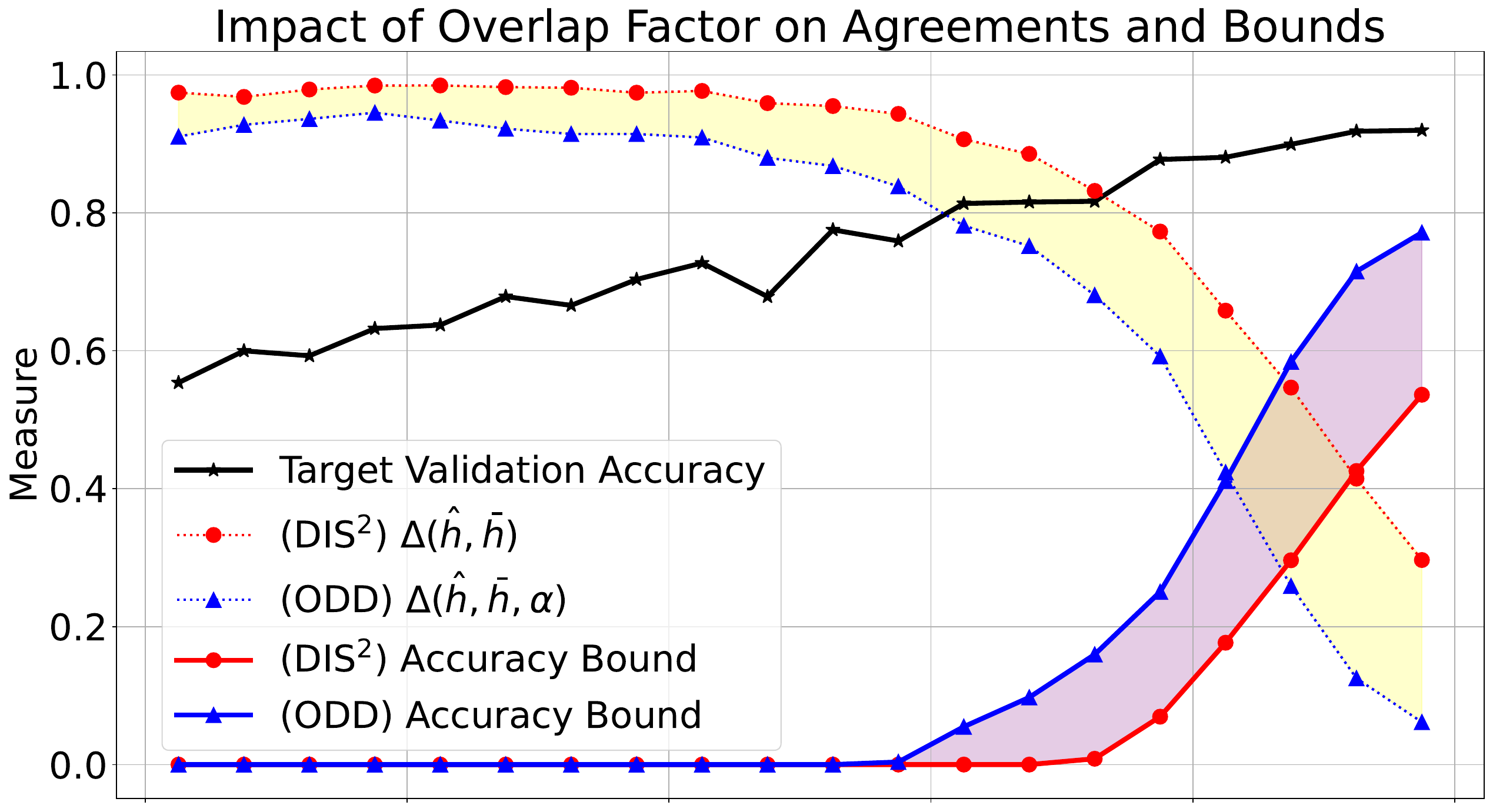}\\
    \includegraphics[width=\linewidth]{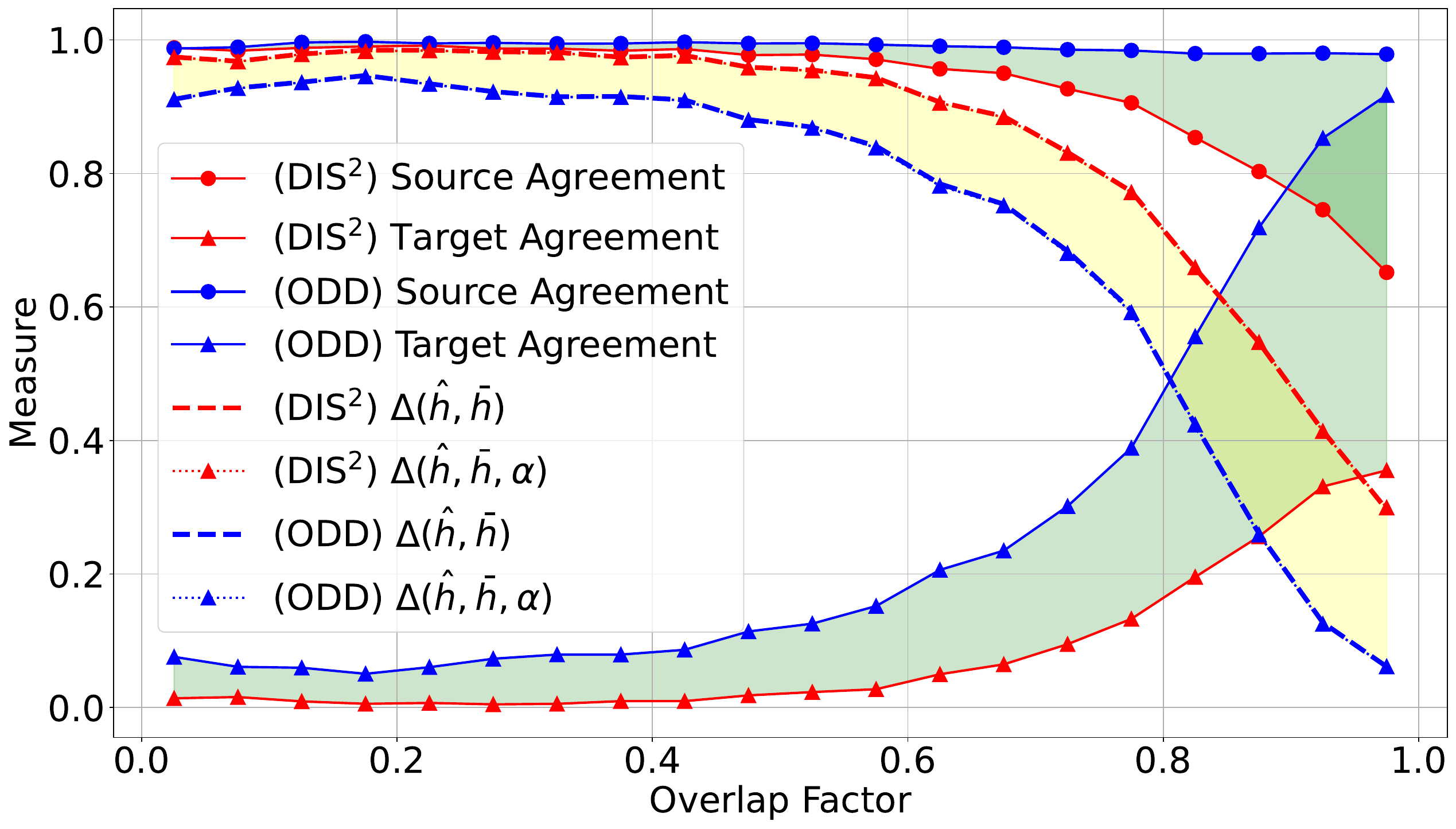}
    \caption{\textbf{TOP:} As the overlap factor increases, \odd becomes increasingly smaller than \disdis (yellow shaded region), resulting in a tighter bound (purple shaded region) that is closer to the target accuracy. 
    \textbf{BOTTOM:} 
    We decompose the discrepancy into individual source and target agreements. 
    Due to the competition in the overlapping region, maximizing \disdis makes $\Bar{h}$ start disagreeing with $\hat{h}$ in the source domain, while maximizing \odd maintains agreement. \odd also allows for much higher target agreement in high overlap cases. Both shown with green shaded region. This results in a less conservative $\Bar{h}$. Moreover, we find that $\Delta(\hat{h}, \Bar{h})\approx\Delta(\hat{h}, \Bar{h}, \alpha)$ for both \odd and \disdis based critics (difference $< 0.01\%$ making plot-lines coincide in this figure), as predicted by our theory.}
    \label{fig:synthetic_main}
\end{figure}

\textbf{Findings---why \odd is better than \disdis:}
In~\autoref{fig:synthetic_main}, we show the relationship between various metrics (discrepancy, target accuracy, predicted target accuracy (the bound) and individual domain agreements) with the overlap rate (for both \odd and \disdis). The target performance prediction is calculated as $\epsilon_\gS(\hat{h}, y^*) - \Delta(\hat{h}, \Bar{h}) - \varepsilon$ for \disdis, and $\epsilon_\gS(\hat{h}, y^*) - \Delta(\hat{h}, \Bar{h}, \alpha) - \varepsilon$ for \odd ($\varepsilon$ is the concentration term in~\autoref{thm:oddbound}). We find that:
\begin{itemize}
    \item As the overlap increases, \disdis-based critics start disagreeing with $\hat{h}$ in the source domain, while \odd-based critics maintain their source agreement.
    \item Even with high overlap, \disdis-based critics disagree substantially with $\hat{h}$ in the target domain, making them overly pessimistic. \odd fixes this by increasing target agreement in high overlap regions. 
    \item For both \disdis and \odd based critics, overall discrepancy $\Delta(\hat{h}, \bar{h})$ is approximately equal to non-overlapping discrepancy $\Delta(\hat{h}, \Bar{h}, \alpha)$, validating our assumption and theory in~\autoref{sec:odd}. This shows that \textit{\odd improves the bound by selecting a better critic}, not by reducing the disagreement for a given critic. 
\end{itemize}

\textit{\odd consistently improves over \disdis by choosing a more optimistic $\Bar{h}$ that agrees more with $\hat{h}$ in the overlapping region, while remaining valid and reliable.}
It makes sure that the critic is chosen as optimistically as possible based on the overlap, without compromising disagreement in the unseen target region.

\begin{table*}[t]
    \begin{adjustbox}{width=0.68\linewidth,center}
    \centering
    \tabcolsep=0.12cm
    \renewcommand{\arraystretch}{1.2}
    \begin{tabular}{lccccccccc}
    \toprule
    {} && \multicolumn{2}{c}{MAE $(\downarrow)$} && \multicolumn{2}{c}{Coverage $(\uparrow)$} && \multicolumn{2}{c}{Overest. $(\downarrow)$} \\
    \multicolumn{1}{r}{\textbf{DA?}} &&          \ding{55} & \ding{51} &&             \ding{55} & \ding{51} &&               \ding{55} & \ding{51} \\
    Prediction Method             &&                    &           &&                      &           &&                         &           \\
    \midrule
    AC \citep{guo2017calibration}                            &&            0.1086 &    0.1091 &&                0.0989 &    0.0333 &&                  0.1187 &    0.1123 \\
    DoC \citep{guillory2021predicting}                           &&             0.1052 &    0.1083 &&                0.1648 &    0.0167 &&                  0.1230 &    0.1095 \\
    ATC NE \citep{garg2022leveraging}                        &&             0.0663 &    0.0830 &&                0.2857 &    0.2000 &&                  0.0820 &    0.1007 \\
    COT \citep{lu2023predicting}                           &&             0.0695 &    0.0808 &&                0.2528 &    0.1833 &&                  0.0858 &    0.0967 \\
    \midrule
    \disdis~\citep{rosenfeld2023almost} && & && & && &\\ 
     \quad Using Features            &&             0.2841 &    0.1886 &&                1.0000 &    1.0000 &&                  0.0000 &    0.0000 \\
     \quad Using Logits              &&             0.1525 &    0.0881 &&                0.9890 &    0.7500 &&                  0.0171 &    0.0497 \\
    \quad Using Logits w/o $\delta$ term &&             0.0996 &    0.0937 &&                0.6813 &    0.2833 &&                  0.0779 &    0.0956 \\
    \midrule
    \odd && & && & && &\\ 
     \quad Using Features            &&             0.2538 &    0.1657 &&                0.9890 &    0.9333 &&                  0.0314 &    0.0318 \\
     \quad Using Logits              &&             0.1190 &    0.0739 &&                0.9341 &    0.7333 &&                 0.0290 &    0.0738 \\
    \quad Using Logits w/o $\delta$ term &&             0.0943 &    0.1005 &&                0.4945 &    0.2000 &&                  0.0803 &    0.1079 \\
    \bottomrule

    \end{tabular}
    \end{adjustbox}
    \caption{Comparing the \odd bound with \disdis bound and other prior methods for predicting accuracy. DA denotes if the representations were learned via a domain-adversarial algorithm (DANN, CDANN). MAE: mean absolute error, Coverage: fraction of predictions correctly bounding the true error, Overest.: MAE among shifts whose accuracy is overestimated. \odd improves on MAE in comparison to \disdis with some loss in coverage, but maintains a much higher lead in coverage over prior methods.}
    \label{tab:main}
\end{table*}

\subsection{Real data experiments}
\label{subsec:real_exps}
\textbf{Datasets:} As we aim to achieve an improvement of \disdis~\citep{rosenfeld2023almost}, we follow their setup and conduct experiments across all vision benchmark datasets used in their study for a fair comparison between \disdis and \odd. These include four BREEDs datasets ~\citep{santurkar2020breeds}:
{Entity13}, {Entity30}, {Nonliving26}, and {Living17}; {FMoW}~\citep{christie2018functional}
from {WILDS}~\citep{wilds2021}; Officehome~\citep{venkateswara2017deep}; {Visda}~\citep{peng2018syn2real, visda2017}; CIFAR10, CIFAR100~\citep{krizhevsky2009learning}; and Domainet~\citep{peng2019moment}. In addition, we conduct additional experiments on a language dataset: CivilComments~\citep{borkan2019nuanced} for breadth of applicable domains. These datasets contain multiple domains and include a wide variety of subpopulation and natural distribution shifts. We present some details about these datasets and their shift variations, and complete details about the new CivilComments dataset in~\autoref{app:expdetails}. Comprehensive details can be found in the \disdis paper.

\textbf{Methods and baselines:} Models are trained with ERM and Unsupervised Domain Adaptation methods (which help improve target performance with unlabeled target data) like FixMatch~\citep{sohn2020fixmatch}, DANN~\citep{ganin2016domain}, CDAN~\citep{long2018conditional}, and
BN-adapt~\citep{li2016revisiting}). Although our paper is positioned as an improvement to \disdis, we also present comparisons with other baselines like Average Confidence (AC)~\citep{guo2017calibration}, Difference of Confidences (DoC)~\citep{guillory2021predicting}, Average Thresholded Confidence (ATC)~\citep{garg2022leveraging}, and Confidence Optimal Transport (COT)~\citep{lu2023predicting}, for completeness. All methods are calibrated with temperature scaling~\citep{guo2017calibration} using source validation data. As our general setup is identical, we follow the implementation details in the \disdis paper. We list experimental details on training the domain classifiers in ~\autoref{app:expdetails}. For the correction term in~\autoref{thm:oddbound}, we use a small $\delta=0.01$ (same as \disdis) in all our experiments. 

\textbf{Bound calculation strategies:} As pointed out by~\citet{rosenfeld2023almost}, the value of the error bound is expected to decrease if the critic is searched for in a restricted hypothesis class. Their real-data experiments are performed on the deep features of the inputs, as classifiers trained on these features have been shown to have the capacity to generalize under distribution shifts~\citep{rosenfeld2022domain, kirichenko2022last}. This makes $\hat{h}$ belong to the linear hypothesis class. Even logits of the source classifier may contain information necessary for this task as prior work suggests that deep network representations have small effective ranks~\citep{arora2018optimization, huh2022lowrank, pezeshki2021gradient}. Therefore, we follow their setup to calculate our bounds using both: the deep feature space, and the logit space. Our domain classifier (used for measuring domain overlap) is also trained on the same space (details in~\autoref{app:expdetails}). Lastly, the $\delta$ term in~\autoref{thm:oddbound} may make the bound too conservative in practice, so we also calculate the bound without it for a complete comparison with \disdis.

\textbf{Evaluation metrics:} We evaluate all methods on mean absolute error (MAE), which measures how close the predicted performance is to the actual performance. In addition, we report coverage, which is the fraction of accuracy predictions that are valid ($\leq$ true accuracies). This measures the reliability of the method. We also report the MAE among predictions which are invalid, to get a sense of how bad the over-estimates are, when they do happen. 

\textbf{Domain-adversarially learnt representations:}
\label{para:da} Domain adversarial representation learning methods like DANN and CDANN, regularize deep representations to be indifferent between source and target domains, to make classifiers robust to distribution shifts. This regularization may cause representations to lose domain specific features resulting in a higher degree of overlap between source and target domain features. A higher degree of overlap typically means a more lenient bound using our \odd method, as it suggests that the two domains are similar (see~\autoref{fig:synthetic_main}). Therefore, when the method produces a classifier which actually has low target error, the \odd-based bound will be tighter. However, when the actual target error is high, it means that the representations have lost important information from the target domain and the high overlap hurts, causing the \odd-based bound to overestimate accuracy. Therefore, we present our findings in two categories: \textbf{DA?} \ding{51}: representing the case when the representations were learnt using a domain adversarial algorithm, and \textbf{DA?} \ding{55} otherwise.

\textbf{Results:} We report our comprehensive evaluation metrics in~\autoref{tab:main}. We find that \textit{\odd-based bounds are more accurate than \disdis-based bounds} in general; and significantly improve the estimate in many cases (especially with non-DA methods). Moreover, this improved MAE does not come at a huge cost of coverage, where the \odd-based bound maintains a much higher coverage compared to other baselines. We discuss non-DA results in~\autoref{app:expdetails}.

\textbf{Improvements in Valid vs Invalid predictions:}
In~\autoref{tab:invalidity}, we segregate the predictions on the basis of validity to investigate failure cases. In most cases, both \disdis and \odd based bounds are valid and, the \odd-based bound is significantly more tight compared to \disdis. In a small number of cases, \odd overshoots the target accuracy while \disdis remains valid. Even in these cases, the MAE of \odd is better than \disdis, meaning it only slightly overestimates the performance of $\hat{h}$. And in a few cases, \odd achieves a valid bound when \disdis does not. We conducted a paired Student’s t-test on the distributions of (target accuracy $-$ predicted lower bound) for \odd and \disdis (for non-DA algorithms). Out of 94 total predictions, 89 were valid (predicted lower bound was actually lower than target accuracy) for both \odd and \disdis. To remove any influence of invalid predictions, we show the t-statistic for both valid and all predictions cases:

\textit{Valid predictions}: t-statistic: \textbf{-5.47}, p-value: 4.14e-07\\
\textit{All predictions}: t-statistic: \textbf{-5.62}, p-value: 2.02e-07

As evident, we achieve a high t-statistic with a very low p-value, giving strong evidence against the null-hypothesis (i.e., no significant difference between the two methods). 

\textbf{Variation due to degree of overlap:} We plot point-wise bound estimates (using logits) for all non-DA trained $\hat{h}$ (~\autoref{fig:real_logits}) for more analysis. 
When actual target accuracy is low (left part of the plot), the overlap between source and target domains is likely low. Hence, we see little improvement over \disdis in this region. When actual target accuracy is high (right part of the plot), domain overlap is likely also high. Therefore, we see a lot more improvement in this region. Around the middle is when overlap estimate is likely the hardest. This is where \odd sometimes overestimates the overlap and the target accuracy (only slightly) as a result.

\begin{table}[t]
    \begin{adjustbox}{width=0.95\linewidth,center}
    \centering
    \tabcolsep=0.12cm
    \renewcommand{\arraystretch}{1.2}
    \begin{tabular}{lccccccccc}
    \toprule
    {MAE $(\downarrow)$} && \multicolumn{2}{c}{\textbf{DA:} \ding{55}} && \multicolumn{2}{c}{\textbf{DA:} \ding{51}}\\
    \multicolumn{1}{r}{} &&          \disdis & \odd &&          \disdis & \odd\\
    Prediction set             &&                    & && &          \\
    \midrule
     \disdis invalid, \odd valid            &&             0.0171 &   \textbf{0.0029}  && 0.0489 &    \textbf{0.0032}\\
     \disdis valid, \odd invalid              &&             0.0540 &  \textbf{0.0248}   &&       0.0325       &    \textbf{0.0215}\\
    Both valid &&             0.1611 &   \textbf{0.1234}  &&             0.1058 &    \textbf{0.0772}\\
    Both invalid &&  0.0000 & 0.0000 &&  \textbf{0.0498} & 0.0822 \\
    \bottomrule

    \end{tabular}
    \end{adjustbox}
    \caption{Comparing MAE of \disdis and \odd conditioned on coverage. When \odd-based bound is invalid, it typically overestimates by a small amount. In comparison, \disdis typically has a larger MAE when it is invalid. In the most popular case, when both are valid, \odd consistently outperforms \disdis.}
    \label{tab:invalidity}
\end{table}

\section{Discussion}
\label{sec:discuss}
While \odd shows consistent improvement in prediction accuracies, we discuss some potential limitations and improvements to our method.

\textbf{Estimating overlap:} We find using a small domain-classifier's softmax probabilities as weights to discount the \disdis contribution of overlapping target samples effective in improving the bound estimation accuracy. However, this approach does not guarantee a good quantification of overlap and is affected by factors like the training hyper-parameters of the domain-classifier (underfitting/overfitting) and the representations on which the classifier is being trained (like DA vs non-DA). A true overlap-aware bound would require accurate density estimation of the source and target densities, which is challenging for high-dimensional data. Nonetheless, our positive results with the CivilComments dataset using BERT embeddings paint a promising picture for the future with increasingly better representation learning.

\begin{figure}[t]
    \centering
    \includegraphics[width=\linewidth]{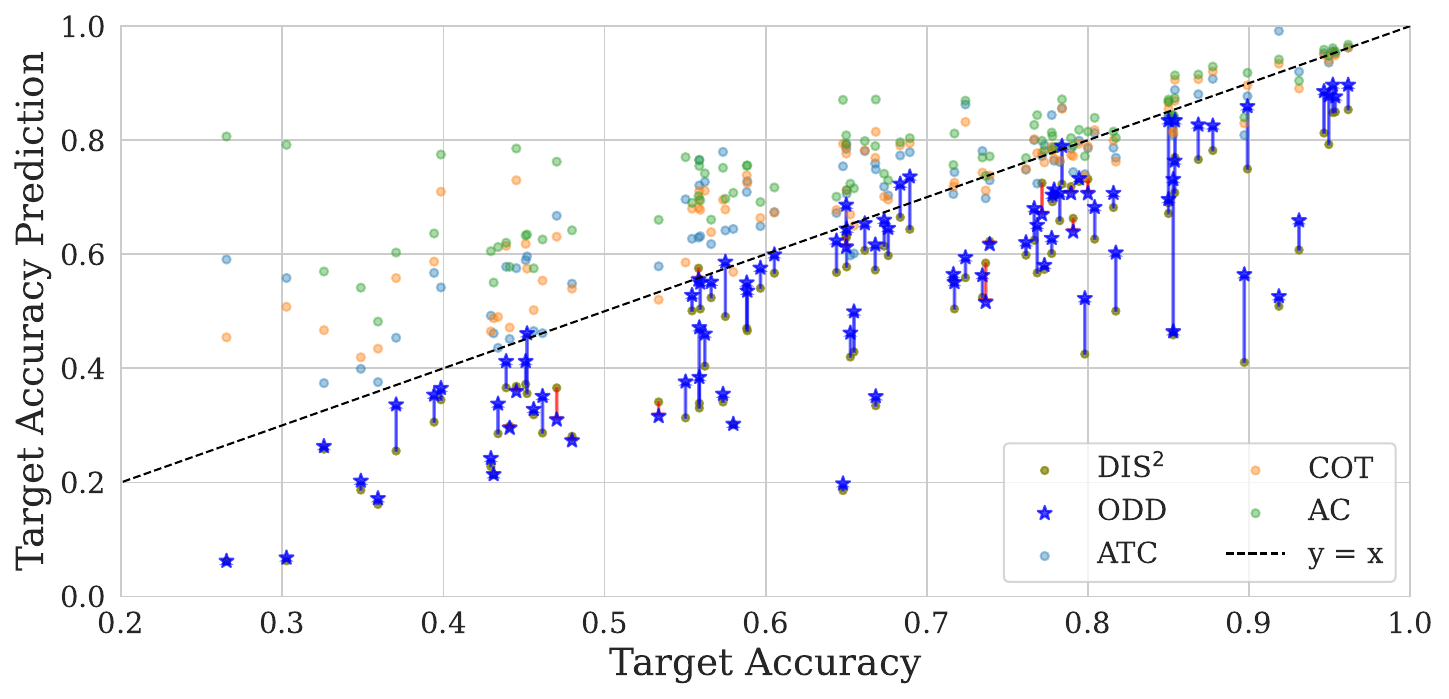}
    \caption{Our \odd-based bound (using logits, \textbf{DA?} \ding{55}) improves the \disdis bound on multiple dataset-method combinations, achieving an improved average MAE. Improvement is shown with \textcolor{blue}{$\uparrow$} (deterioration with \textcolor{red}{$\downarrow$}). Notice that our bound, like \disdis, still almost always maintains its reliability, unlike other baseline methods.}
    \label{fig:real_logits}
\end{figure}

\begin{figure*}[t]
    \centering
    \includegraphics[width = 0.7\linewidth]{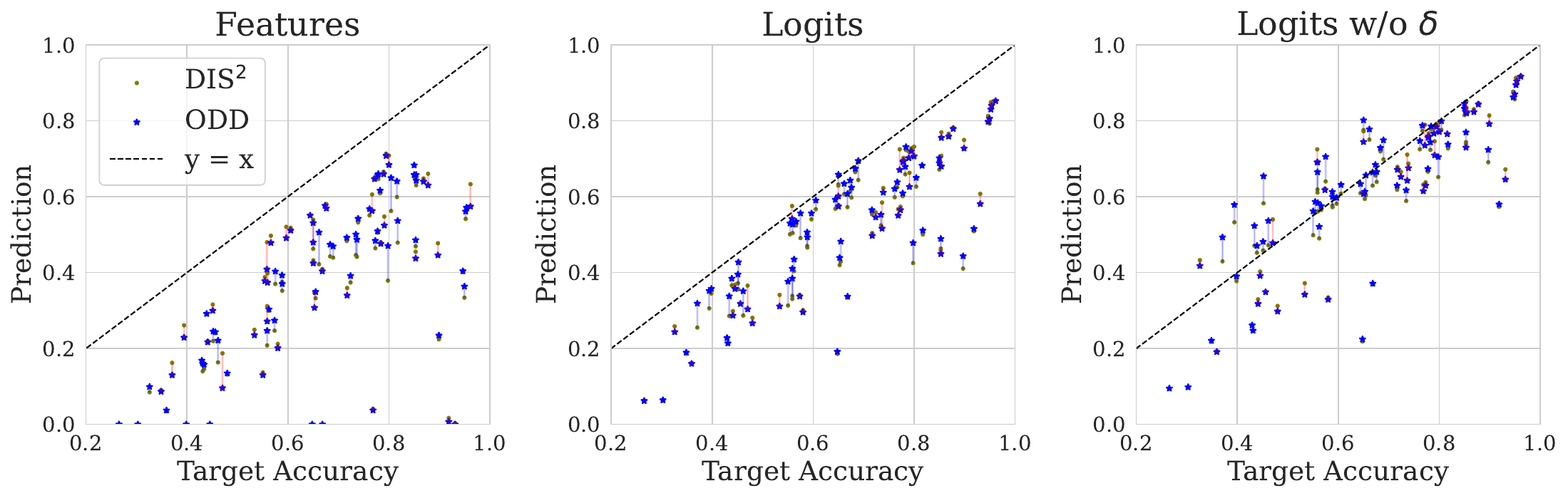}
    \includegraphics[width = 0.25\linewidth]{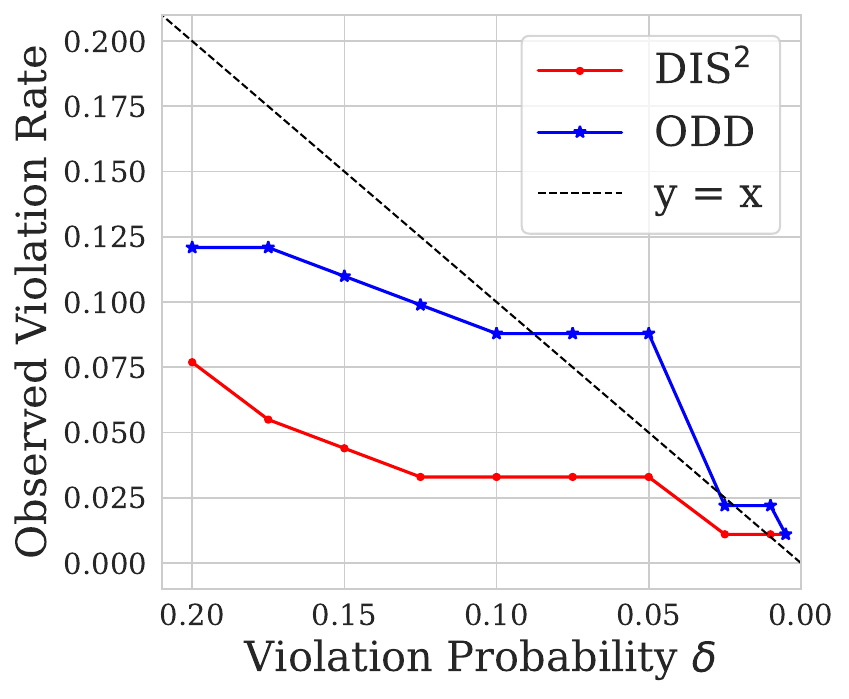}
    \caption{\textbf{(First 3)}: Comparing \disdis and \odd on estimated bound based on features and logits vs true accuracy. ``w/o $\delta$'' drops the $\delta$ term of~\autoref{thm:oddbound}. \textbf{(4$^{\text{th}}$)} Observed bound violation rate vs. desired probability $\delta$ from~\autoref{thm:oddbound} based on logits. The observed violation rate for both \disdis and \odd remains under the allowed violation probability for most $\delta$'s, but \odd overshoots it for a small set.}
    \label{fig:coverage}
\end{figure*}

\textbf{When does \odd fail?} In~\autoref{fig:coverage}, we compare the estimated bound and the true accuracy on a variety of non-DA representations in both the feature and logit space. We find that both \odd and \disdis always remain valid in the case of feature space, and almost always valid in the case of logit space. In both cases, \odd improves on \disdis with a more accurate prediction of the bound. However, without the correction $\delta$ term in~\autoref{thm:oddbound}, both methods overestimate the accuracy many times. We observe that for a small range of $\delta$ values, \odd marginally overshoots the allowed probability of violations. We suspect that this could be caused by an over-estimation of the overlap between the two domains in some cases, making the critic agree more with $\hat{h}$ than it should have. Better overlap estimation should be able to mitigate this issue, which we defer to future work.

\textbf{Sample Complexity:}
 Our theory does not include sample complexity analysis of training the evaluated classifier. We assume that the classifier is given to us and we focus on estimating the target performance with finite samples. We restricted our search for the critic in the linear hypothesis class which has a uniform sample complexity, but sample complexity for linear classifiers can still be measured and analyzed in a non-uniform way. 
In our analysis, following~\citet{rosenfeld2023almost}, we repeated our experiments 30 times to dilute variance (for each dataset, the critic used in bound calculation was chosen from a set of 30 critics learnt on the same data, based on max discrepancy). 
Regarding analyzing the variance and how it can in turn affect our discrepancy measure and the final bound with a non-uniform sample complexity, we refer to future work.

\section{Related Work}
\label{sec:related}

\paragraph{Generalization bounds under distribution shift:} Finding generalization bounds under distribution shift is a classic machine learning problem with a long history of development~\citep{redko2019advances}. Early works using $\gH$ and $\gH\Delta\gH$ divergence~\citep{bendavid2007analysis, mansour2009domain, bendavid2010theory} established domain adaptation bounds through uniform convergence. These works inspired many efforts in making classifiers robust to distribution shifts~\citep{zhang2019shiftinvar,ganin2016domain, long2018conditional,rahimian2019distributionally, sagawa2019distributionally,arjovsky2019invariant}. In particular, a major line of work leverages representation invariance across domains for domain adaptation and generalization \citep{johansson2019support,zhao2019learning}
Theoretically, the $\gH\Delta\gH$ divergence motivates further improvement on the definition of divergence to measure the difference between domains, with respect to a particular function class \citep{zhang2019bridging}. Application to various types of distribution shift \citep{awasthi2023theory} and new methods for the computation of the discrepancy \citep{kuroki2019unsupervised} also followed. 
In addition, PAC-Bayesian analysis has also been applied to this problem \citep{germain2013pac, germain2016new} which was recently extended to provide non-uniform sample complexity analysis~\citep{sicilia2022pac}. However, due to the general difficulty of estimating the divergence, it is usually hard to directly estimate the target performance following theoretical bounds.  \citep{rosenfeld2023almost} developed \disdis as a theoretically sound but practical method to bound the target risk of a given source classifier, which is the setting of focus in this work. 

Another line of work aims to bound the generalization risk of models (espeically neural networks) by measuring and using the complexity of these models~\citep{bartlett2017spectrally, dziugaite2017computing, zhou2018non}. By evaluating the complexity, these works estimate the expressive capacity of the models providing insights into their worst-case behavior, but often lead to loose bounds.

\textbf{Predicting error in a target domain:} Other works focus on predicting the error of a trained classifier/network on unlabeled samples from a target domain. They either provide instance-level estimates using ensembles or data augmentations~\citep{chen2021detecting, deng2021labels} or an overall domain-level error using empirical observations~\citep{baek2022agreement}, domain-invariant representations~\citep{chuang2020estimating}, average thresholded confidence~\citep{garg2022leveraging} or difference of confidences~\citep{guillory2021predicting}. Some of these methods require labeled target samples for calibration, which makes their applicability limited.
In our work, we mainly focus our effort on improving \disdis~\citep{rosenfeld2023almost} as it presents a strong practical method to give reliable performance bounds estimates. Reliability is a major concern especially when the actual target performance is low.
With this work, we push to improve the prediction accuracy of \disdis without compromising reliability.

\section{Conclusion}
\label{sec:conclusion}

In this work, we identified a potential for improvement in recent work which uses disagreement discrepancy (\disdis) to calculate practical bounds on the performance of a source trained classifier in unseen target domains. \disdis suffers from a competition between source and target samples for agreement and disagreement respectively in the overlapping region, which may results in a suboptimal critic due to unstable optimization. We mitigate this issue by using overlap-aware disagreement discrepancy (\odd). By restricting the disagreement in the non-overlapping target domains, we eliminate the instability in the overlapping region, and derive a new bound on the basis of \odd. We train a domain classifier to discriminate between source and target samples. 
We use the softmax probabilities of this domain classifier to act as weights to discount the disagreement of target samples in the overlapping region. This makes the critic agree with source samples in the overlapping region and improves performance bounds as a result, without much compromise in the reliability of the bound. 

In the future, we hope to further improve the overlap estimation step. Moreover, in the era of Large Language Models (LLMs), where the input/output space is highly complex, we aim to extend our method to develop generic definitions of disagreement and overlap beyond classification settings; and provide performance estimation for LLMs.

\begin{acknowledgements} 
We thank the reviewers for their insightful feedback. We also benefit from the highly reproducible work by~\citet{rosenfeld2023almost} in our experiments.
Both authors are partially supported by a seed grant from JHU Institute of Assured Autonomy (IAA). AL is also partially supported by an Amazon Research Award.
\end{acknowledgements}

\bibliography{odd}

\newpage

\onecolumn

\title{ODD: Overlap-aware Estimation of Model Performance under Distribution Shift\\(Supplementary Material)}
\maketitle

\appendix
\section{Proofs}
\label{app:proofs}

\begin{lemma}
\label{lemma:error-expansion_app}
    For any classifier $h$, \\
    $\epsilon_{\gT}(h) \leq \epsilon_{\gS}(h) + \Delta(h, y^*) + \lambda$.
\end{lemma}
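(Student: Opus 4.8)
The plan is to reduce the lemma to the triangle inequality for the $0$--$1$ disagreement ``metric''. First I would record the elementary pointwise fact that for any three labels $a,b,c$ we have $\mathbf{1}\{a\neq c\} \leq \mathbf{1}\{a\neq b\} + \mathbf{1}\{b\neq c\}$ (check the cases $a=c$ and $a\neq c$ separately). Applying this with $a,b,c$ taken to be the $\argmax$-predictions of three classifiers at a point $x$, and taking expectations over $x\sim p_\gD$, gives $\epsilon_\gD(h,h'') \leq \epsilon_\gD(h,h') + \epsilon_\gD(h',h'')$ for any classifiers $h,h',h''$ and any domain $\gD$; symmetry of $\mathbf{1}\{a\neq b\}$ also gives $\epsilon_\gD(h,h') = \epsilon_\gD(h',h)$.

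Next I would apply this twice, inserting the ideal joint hypothesis $y^*$ as the intermediate term. On the target domain, $\epsilon_\gT(h) = \epsilon_\gT(h,y^*_\gT) \leq \epsilon_\gT(h,y^*) + \epsilon_\gT(y^*,y^*_\gT) = \epsilon_\gT(h,y^*) + \epsilon_\gT(y^*)$, using the overloaded notation $\epsilon_\gD(\cdot) := \epsilon_\gD(\cdot,y^*_\gD)$. On the source domain, $\epsilon_\gS(h,y^*) \leq \epsilon_\gS(h,y^*_\gS) + \epsilon_\gS(y^*_\gS,y^*) = \epsilon_\gS(h) + \epsilon_\gS(y^*)$. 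I would then substitute the definition of \disdis, namely $\epsilon_\gT(h,y^*) = \epsilon_\gS(h,y^*) + \Delta(h,y^*)$, into the first chain and bound $\epsilon_\gS(h,y^*)$ by the second, obtaining $\epsilon_\gT(h) \leq \epsilon_\gS(h) + \epsilon_\gS(y^*) + \epsilon_\gT(y^*) + \Delta(h,y^*)$; finally I recognize $\epsilon_\gS(y^*) + \epsilon_\gT(y^*) = \lambda$ by definition, which is exactly the claimed bound.

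There is no real obstacle here — the argument is two applications of the triangle inequality plus unfolding of definitions. The only points demanding care are keeping the overloaded $\epsilon$ notation straight ($\epsilon_\gD(h)$ always means disagreement with the \emph{domain-specific} labeler $y^*_\gD$, whereas $\Delta(h,y^*)$ refers to the single joint hypothesis $y^*$ from \autoref{def:ideal}), and noticing that the inequality does not actually use the minimality of $y^*$: any fixed reference classifier would yield an analogous bound, and choosing $y^*$ as the joint-risk minimizer merely makes the additive constant $\lambda = \epsilon_\gS(y^*)+\epsilon_\gT(y^*)$ as small as possible.
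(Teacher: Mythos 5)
Your proposal is correct and follows essentially the same route as the paper: one triangle inequality on the target domain to introduce $y^*$, an add-and-subtract of $\epsilon_\gS(h,y^*)$ to expose $\Delta(h,y^*)$, a second triangle inequality on the source domain, and the identification of $\lambda$. Your added remarks (the pointwise derivation of the triangle inequality and the observation that minimality of $y^*$ is not used) are accurate but not needed beyond what the paper already does.
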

\begin{proof}
    \begin{align*}
        \epsilon_{\gT}(h) &\leq \epsilon_{\gT}(h, y^*) + \epsilon_{\gT}(y^*) \quad \text{[Triangle Inequality]}\\
        &= \epsilon_{\gS}(h, y^*) + (\epsilon_{\gT}(h, y^*) - \epsilon_{\gS}(h, y^*)) + \epsilon_{\gT}(y^*)\\
        &\leq \epsilon_{\gS}(h) + \Delta(h, y^*) + (\epsilon_{\gT}(y^*) + \epsilon_{\gS}(y^*)) \\
        &= \epsilon_{\gS}(h) +\Delta(h, y^*) + \lambda
    \end{align*}
\end{proof}

\begin{theorem}[\disdis Bound]
    \label{thm:dis2bound_app}
    Under~\autoref{ass:dis2concave}, with probability $\geq 1-\delta$,
    \begin{align*}
        \epsilon_{\gT}(\hat{h}) &\leq \epsilon_{\hat{\gS}}(\hat{h}) + \hat{\Delta}(\hat{h}, \Bar{h}) + \sqrt{\frac{(n_S + 4n_T) \log \frac{1}{\delta}}{2 n_S n_T}} + \lambda.
    \end{align*}
\end{theorem}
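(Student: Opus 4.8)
The plan is to chain three ingredients already available above: \autoref{lemma:error-expansion_app}, \autoref{ass:dis2concave}, and a single concentration inequality. First I would apply \autoref{lemma:error-expansion_app} with $h=\hat{h}$, giving $\epsilon_{\gT}(\hat{h}) \le \epsilon_{\gS}(\hat{h}) + \Delta(\hat{h}, y^*) + \lambda$. Since $y^*$ is not observable, I then invoke \autoref{ass:dis2concave} to replace the discrepancy against $y^*$ by the discrepancy against the critic, $\Delta(\hat{h}, y^*) \le \Delta(\hat{h}, \Bar{h})$, and expand $\Delta(\hat{h},\Bar{h}) = \epsilon_{\gT}(\hat{h},\Bar{h}) - \epsilon_{\gS}(\hat{h},\Bar{h})$. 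At this point the theorem reduces to a purely statistical claim: with probability $\ge 1-\delta$,
\begin{align*}
\epsilon_{\gS}(\hat{h}) - \epsilon_{\gS}(\hat{h},\Bar{h}) + \epsilon_{\gT}(\hat{h},\Bar{h}) \;\le\; \epsilon_{\hat{\gS}}(\hat{h}) + \hat{\Delta}(\hat{h},\Bar{h}) + \sqrt{\tfrac{(n_S + 4n_T)\log(1/\delta)}{2 n_S n_T}},
\end{align*}
i.e.\ that the population version of $\epsilon_{\gS}(\hat{h}) + \Delta(\hat{h},\Bar{h})$ exceeds its empirical version by at most the stated radical.

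For that step I would set $\phi := \epsilon_{\hat{\gS}}(\hat{h}) - \epsilon_{\hat{\gS}}(\hat{h},\Bar{h}) + \epsilon_{\hat{\gT}}(\hat{h},\Bar{h})$ and regard it as a function of the $n_S + n_T$ independent samples constituting $\hat{\gS}$ and $\hat{\gT}$, whose expectation is exactly the left-hand population expression. Swapping one source sample changes $\epsilon_{\hat{\gS}}(\hat{h})$ by at most $1/n_S$ and $\epsilon_{\hat{\gS}}(\hat{h},\Bar{h})$ by at most $1/n_S$, hence $\phi$ by at most $2/n_S$; swapping one target sample changes $\phi$ by at most $1/n_T$. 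McDiarmid's bounded-differences inequality then gives $\mathbb{P}(\E\phi - \phi \ge t) \le \exp\!\big(-2t^2/(4/n_S + 1/n_T)\big)$, and equating the right-hand side with $\delta$ yields precisely $t = \sqrt{(n_S + 4n_T)\log(1/\delta)/(2 n_S n_T)}$; since $\epsilon_{\hat{\gT}}(\hat{h},\Bar{h}) - \epsilon_{\hat{\gS}}(\hat{h},\Bar{h}) = \hat{\Delta}(\hat{h},\Bar{h})$, the displayed inequality follows. (Applying Hoeffding separately to the source term — range $[-1,1]$ — and the target term — range $[0,1]$ — with a union bound also works but gives the weaker rate $\sqrt{2\log(2/\delta)/n_S} + \sqrt{\log(2/\delta)/(2n_T)}$; the single bounded-differences argument on the combined quantity is what reproduces the stated constant.) Combining this with the lemma and the assumption, carrying $\lambda$ along unchanged, gives the theorem.

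The step I expect to be the real obstacle is legitimizing the concentration argument, since $\Bar{h}$ is itself learned from the data (it maximizes the empirical \disdis surrogate), so strictly speaking $\phi$ is not a fixed function of i.i.d.\ draws and $\E\phi$ is not the population object we want. I would resolve this the way \citet{rosenfeld2023almost} do: either treat $\Bar{h}$ as fitted on a sample independent of the one used to evaluate the source/target disagreements, so the bounded-differences argument applies conditionally on $\Bar{h}$, or take a uniform bound over the restricted hypothesis class (here linear classifiers on fixed features) in which the critic is searched and check that the extra complexity term is dominated by, or folded into, the stated radical. The only other point requiring care is the asymmetric sensitivity $2/n_S$ versus $1/n_T$, which is exactly what makes the numerator $n_S + 4n_T$ rather than $n_S + n_T$.
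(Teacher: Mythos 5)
Your proposal is correct and follows essentially the same route as the paper: the paper also chains \autoref{lemma:error-expansion_app} with \autoref{ass:dis2concave} and then applies Hoeffding's inequality to a sum of per-sample variables with ranges $2/n_S$ (source, since the combined indicator difference lies in $\{-1,0,1\}/n_S$) and $1/n_T$ (target), giving the exponent $\exp\bigl(-2t^2/(4/n_S+1/n_T)\bigr)$ — identical to your McDiarmid bounded-differences computation. Your closing caveat about $\Bar{h}$ being data-dependent is a real subtlety that the paper's proof silently elides (it treats $\Bar{h}$ as fixed when applying the concentration step), but this does not change the argument's structure.
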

\begin{proof}
    We follow~\cite{rosenfeld2023almost} with our notion of \textit{adaptability} through the ideal joint hypothesis $y^*$. \\
    From~\autoref{lemma:error-expansion_app} and~\autoref{ass:dis2concave}, we have, $\epsilon_{\gT}(h) \leq \epsilon_{\gS}(h) +\Delta(h, \Bar{h}) + \lambda$.

    To upper bound the first two terms using empirical estimates, we define the following random variables:
    \begin{align*}
        r_{\gS, i} = 
        \begin{cases}
        0,& \Bar{h}(x_i) = y^\gS_i,\\
        \frac{1}{n_S},& \Bar{h}(x_i) = \hat{h}(x_i) \neq y^\gS_i,\\
        \frac{-1}{n_S},& \Bar{h}(x_i) \neq \hat{h}(x_i) = y^\gS_i , 
        \end{cases}
        \qquad
        r_{\gT, i} = \frac{\mathbf{1}\{\hat{h}(x_i) \neq \Bar{h}(x_i)\}}{n_T}
    \end{align*}
    Here, $y^\gS_i$ is the source ground truth label for the $i^\text{th}$ sample (according to $y^*_\gS$).
    Consider the following sums:
    \begin{align*}
        \sum_{\gS} r_{\gS,i} = \frac{1}{n_\gS}\sum_{\gS}[\mathbf{1}\{\hat{h}(x_i) \neq y^\gS_i\} - \mathbf{1}\{\hat{h}(x_i) \neq \Bar{h}(x_i)\}] &= \epsilon_{\hat{\gS}}(\hat{h}, y^*_\gS) - \epsilon_{\hat{\gS}}(\hat{h}, \Bar{h}),\\
        \sum_{\gT} r_{\gT, i} =\frac{1}{n_\gT}\sum_{\gT}[\mathbf{1}\{\hat{h}(x_i) \neq \Bar{h}(x_i)\}] &= \epsilon_{\hat{\gT}}(\hat{h}, \Bar{h}).
    \end{align*}
    Their sum $\epsilon_{\hat{\gS}}(\hat{h}, y^*_\gS) - \epsilon_{\hat{\gS}}(\hat{h}, \Bar{h}) + \epsilon_{\hat{\gT}}(\hat{h}, \Bar{h}) = \epsilon_{\hat{\gS}}(\hat{h}) + \hat{\Delta}(\hat{h}, \Bar{h})$ gives us the empirical estimate of the first two terms. The sum of their corresponding population terms: 
    \begin{align*}
        \E_\gS\left[ \sum_{\gS} r_{\gS,i} \right] &= \epsilon_{\gS}(\hat{h}) - \epsilon_{\gS}(\hat{h}, \Bar{h}),\\
        \E_\gT\left[ \sum_{\gT} r_{\gT, i} \right] &= \epsilon_{\gT}(\hat{h}, \Bar{h}),
    \end{align*}
    and $\lambda$ gives us the bound: $\epsilon_{\gT}(h) = \epsilon_{\gS}(\hat{h}) + \Delta(\hat{h}, \Bar{h}) + \lambda$.
    Applying Hoeffding's inequality: the probability that the expectation exceeds their sum by $t$ is no more than $\exp\left(-\frac{2t^2}{n_S \left(\frac{2}{n_S}\right)^2 + n_T \left(\frac{1}{n_T}\right)^2}\right)$ and solving for $t$ completes the proof.
\end{proof}

Note that the $\lambda$ term is incalculable in our setting (without access to target labels), so this bound only provides a qualitative relationship to the target risk. In practice, if $\lambda$ is estimated through held out target samples, the adjustment due to finite sampling (through Hoeffding's inequality) will also change accordingly.

\begin{theorem}
    \label{thm:splitbound_app}
    Under~\autoref{ass:dis2concave},
    with probability $\geq 1-\delta$,
    \begin{align*}
        \epsilon_{\gT}(\hat{h}) &\leq \epsilon_{\hat{\gS}}(\hat{h}) + \hat\Delta(\hat{h}, \Bar{h}, \alpha) + \underline{\hat\Delta}(h, \Bar{h}, \alpha) + \sqrt{\frac{(n_S + 4n_T) \log \frac{1}{\delta}}{2 n_S n_T}} + \lambda.
    \end{align*}
\end{theorem}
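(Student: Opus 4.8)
The plan is to recognize that \autoref{thm:splitbound_app} is just \autoref{thm:dis2bound_app} rewritten after decomposing the disagreement discrepancy into its overlapping and non-overlapping parts, so no fresh probabilistic argument is needed; the work is entirely in bookkeeping the two partitions consistently.

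First I would record additivity of the one-hot disagreement over the partition of a domain $\gD$ into the disjoint sets $\gD\cap\gD_\alpha$ and $\gD\setminus\gD_\alpha$, whose union is $\gD$. Writing $\epsilon_\gD(h,h') = \int_{x\in\gD} p_\gD(x)\gK\,dx$ and using the separability property $\epsilon_\gD(h,h') = \epsilon^\gD_\gA(h,h') + \epsilon^\gD_\gB(h,h')$ for any disjoint cover $\gA\cup\gB = \gD$, I get $\epsilon_\gT(h,h') = \epsilon^\gT_{\gD_\alpha}(h,h') + \epsilon^\gT_{\gT\setminus\gD_\alpha}(h,h')$ and the analogous identity for $\gS$. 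Subtracting the source expansion from the target expansion and regrouping gives $\Delta(h,h') = \Delta(h,h',\alpha) + \underline{\Delta}(h,h',\alpha)$, i.e. exactly the identity already noted after \autoref{def:overlap_dis}. The same partition applied to the empirical measures on $\hat{\gS}$ and $\hat{\gT}$ — splitting the sample indices according to membership in $\hat{\gD_\alpha}$ — yields the empirical analogue $\hat\Delta(h,h') = \hat\Delta(h,h',\alpha) + \underline{\hat\Delta}(h,h',\alpha)$.

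Next I would simply invoke \autoref{thm:dis2bound_app}: under \autoref{ass:dis2concave}, with probability at least $1-\delta$, $\epsilon_\gT(\hat{h}) \le \epsilon_{\hat{\gS}}(\hat{h}) + \hat\Delta(\hat{h},\Bar{h}) + \sqrt{(n_S+4n_T)\log(1/\delta)/(2n_Sn_T)} + \lambda$, and substitute $\hat\Delta(\hat{h},\Bar{h}) = \hat\Delta(\hat{h},\Bar{h},\alpha) + \underline{\hat\Delta}(\hat{h},\Bar{h},\alpha)$ on the right. Importantly, the concentration term is not affected: the high-probability event in \autoref{thm:dis2bound_app} comes from applying Hoeffding's inequality to the fixed finite sums $\sum_\gS r_{\gS,i}$ and $\sum_\gT r_{\gT,i}$, and partitioning those sums by the set $\hat{\gD_\alpha}$ merely reorganizes the same summands — it neither changes the random variable being concentrated nor the bounded-differences ranges $2/n_S$ and $1/n_T$ — so no union bound over the split is incurred and the radical is unchanged.

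The one subtlety, which I would flag as the (mild) main obstacle, is the status of $\gD_\alpha$: the population decomposition is exact for any fixed measurable overlap set and the empirical decomposition is exact for any fixed sample-level set $\hat{\gD_\alpha}$, so here I treat $\gD_\alpha$ as a given measurable set exactly as in \autoref{def:overlap} and do not account for estimating it. Under that convention the proof collapses to a single substitution, which is precisely why \autoref{thm:splitbound} carries no theoretical improvement over \autoref{thm:dis2bound} for a fixed critic — the gain comes only later, via \autoref{ass:overlap} and the choice of a better $\Bar{h}$.
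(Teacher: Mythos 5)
Your proposal is correct and matches the paper's own proof, which likewise observes that $\Delta(h,h') = \Delta(h,h',\alpha) + \underline{\Delta}(h,h',\alpha)$ lets one split the random variables from the proof of \autoref{thm:dis2bound_app} into the overlapping and non-overlapping regions without altering the Hoeffding concentration step. Your explicit remark that the partition merely reorganizes the same summands (so the bounded ranges and hence the radical are unchanged, and no union bound is needed) is exactly the content of the paper's terser ``the rest of the proof follows similarly.''
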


\begin{proof}
    As $\Delta(h, h') = \Delta(h, h', \alpha) + \underline{\Delta}(h, h', \alpha)$ for any $h, h'$ and $\alpha$, the random variables defined in the proof for~\autoref{thm:dis2bound_app} can be split in two mutually exclusive and exhaustive sets (overlapping and non-overlapping regions). The rest of the proof follows similarly.
\end{proof}



\begin{theorem}[\odd Bound]
    \label{thm:oddbound_app}
    Under~\autoref{ass:oddconcave} and~\autoref{ass:overlap},
    with probability $\geq 1-\delta$,
    \begin{align*}
        \epsilon_{\gT}(\hat{h}) &\leq \epsilon_{\hat{\gS}}(\hat{h}) + \hat\Delta(\hat{h}, \Bar{h}, \alpha) +
        \sqrt{\frac{(n_S + 4n_T) \log \frac{1}{\delta}}{2 n_S n_T}}
    \end{align*}
\end{theorem}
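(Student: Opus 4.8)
The plan is to split the argument into a population step and a finite-sample step. At the population level I would combine \autoref{lemma:error-expansion_app} with \autoref{ass:overlap} and \autoref{ass:oddconcave} (together with the standing choice of a common labeling function $y^*=y^*_\gS=y^*_\gT$, which forces $\lambda=0$) to obtain $\epsilon_\gT(\hat{h})\le\epsilon_\gS(\hat{h})+\Delta(\hat{h},\Bar{h},\alpha)$. Then I would re-run the Hoeffding argument from the proof of \autoref{thm:dis2bound_app}, but with the per-sample random variables partitioned according to whether the point falls in the overlap set $\hat{\gD_\alpha}$, and check that this partition leaves the concentration term untouched.

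\textbf{Population step.} First I would apply \autoref{lemma:error-expansion_app} with $h=\hat{h}$, giving $\epsilon_\gT(\hat{h})\le\epsilon_\gS(\hat{h})+\Delta(\hat{h},y^*)+\lambda$. Using the decomposition $\Delta(\hat{h},y^*)=\Delta(\hat{h},y^*,\alpha)+\underline{\Delta}(\hat{h},y^*,\alpha)$ noted after \autoref{def:overlap_dis}: \autoref{ass:overlap} states $\underline{\Delta}(\hat{h},y^*,\alpha)=\epsilon^\gT_{\gD_\alpha}(\hat{h},y^*)-\epsilon^\gS_{\gD_\alpha}(\hat{h},y^*)\le 0$, so this summand is discarded from the upper bound; \autoref{ass:oddconcave} replaces $\Delta(\hat{h},y^*,\alpha)$ by the larger quantity $\Delta(\hat{h},\Bar{h},\alpha)$; and $\lambda=0$ under the common-$y^*$ convention. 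Chaining these yields the population bound $\epsilon_\gT(\hat{h})\le\epsilon_\gS(\hat{h})+\Delta(\hat{h},\Bar{h},\alpha)$.

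\textbf{Finite-sample step.} Next I would upper-bound $\epsilon_\gS(\hat{h})+\Delta(\hat{h},\Bar{h},\alpha)$ by $\epsilon_{\hat{\gS}}(\hat{h})+\hat\Delta(\hat{h},\Bar{h},\alpha)$ plus a deviation term, mirroring the proof of \autoref{thm:dis2bound_app}. I would express this quantity as a sum over source and target samples of bounded independent variables, now keyed on overlap membership: a source point in $\hat{\gD_\alpha}$ contributes only the error indicator $\mathbf{1}\{\hat{h}(x_i)\neq y^\gS_i\}/n_S$ (the overlapping source region does not enter $\hat\Delta(\hat{h},\Bar{h},\alpha)$), a source point outside $\hat{\gD_\alpha}$ contributes the same three-valued variable taking values in $\{-1/n_S,0,1/n_S\}$ as in \autoref{thm:dis2bound_app}, a target point outside $\hat{\gD_\alpha}$ contributes $\mathbf{1}\{\hat{h}(x_i)\neq\Bar{h}(x_i)\}/n_T$, and a target point inside $\hat{\gD_\alpha}$ contributes $0$. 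Summing recovers $\epsilon_{\hat{\gS}}(\hat{h})+\hat\Delta(\hat{h},\Bar{h},\alpha)$, with expectation equal to the population quantity above. Since every source variable still lies in an interval of width at most $2/n_S$ and every target variable in one of width at most $1/n_T$, Hoeffding's inequality reproduces exactly $\sqrt{(n_S+4n_T)\log(1/\delta)/(2n_Sn_T)}$, and combining with the population step completes the proof. One could equivalently start from \autoref{thm:splitbound_app}, but the empirical overlap term $\underline{\hat\Delta}$ must be eliminated at the population level rather than on the sample, for the reason given next.

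\textbf{Expected obstacle.} I expect the only genuine subtlety to be in the finite-sample step, and it is conceptual rather than computational: one must not delete the \emph{empirical} overlap term $\underline{\hat\Delta}(\hat{h},\Bar{h},\alpha)$ from \autoref{thm:splitbound_app} directly, because it is not sign-controlled on a finite sample — only its population analogue $\underline{\Delta}(\hat{h},y^*,\alpha)$, and only for the pair $(\hat{h},y^*)$, is controlled, via \autoref{ass:overlap}. So the overlap contribution has to be removed \emph{before} any concentration is invoked, which is precisely why the argument routes through \autoref{lemma:error-expansion_app}. Once that ordering is respected, the rest is bookkeeping: confirming that carving out the overlapping samples — a sub-case of the source variable already analyzed in \autoref{thm:dis2bound_app}, plus an identically zero target variable — does not widen any Hoeffding range, so the $\delta$-dependent term is identical to the one in \autoref{thm:dis2bound_app}.
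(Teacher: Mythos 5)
Your proposal is correct and follows the same overall skeleton as the paper's proof: a population-level bound obtained from \autoref{lemma:error-expansion_app} together with \autoref{ass:overlap} and \autoref{ass:oddconcave} (and $\lambda=0$ under the common-$y^*$ convention), followed by a Hoeffding argument over per-sample random variables partitioned by membership in $\gD_\alpha$, with the same ranges and hence the same concentration term. The one place you genuinely diverge is \emph{where} the overlap term is discarded, and your ordering is arguably the more faithful one. The paper keeps the full \disdis random variables on the overlap region (the three-valued source variable and the target disagreement indicator), sums their expectations to $\epsilon_{\gS}(\hat{h}) + \Delta(\hat{h},\Bar{h},\alpha) + \underline{\Delta}(\hat{h},\Bar{h},\alpha)$, and then drops $\underline{\Delta}(\hat{h},\Bar{h},\alpha)\le 0$ citing \autoref{ass:overlap} --- even though that assumption is stated for the pair $(\hat{h},y^*)$, not $(\hat{h},\Bar{h})$, so the paper implicitly extends it to the learned critic. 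You instead discard $\underline{\Delta}(\hat{h},y^*,\alpha)$ before ever introducing $\Bar{h}$, invoking \autoref{ass:oddconcave} only for the non-overlap discrepancy; this uses the assumptions exactly as written. Your finite-sample bookkeeping is also cleaner: by zeroing the target-overlap variable and reducing the source-overlap variable to the bare error indicator, your empirical sum is literally $\epsilon_{\hat{\gS}}(\hat{h}) + \hat\Delta(\hat{h},\Bar{h},\alpha)$, matching the theorem statement, whereas the paper's variables as defined sum to the full $\epsilon_{\hat{\gS}}(\hat{h}) + \hat\Delta(\hat{h},\Bar{h})$ and the reader must infer that the overlap contributions are to be dropped on the sample as well. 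Both routes give the same constant in the Hoeffding term since none of your modified variables has a wider range than the originals.
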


\begin{proof}
    With the common $y^*$, $\lambda$ goes to zero. Hence,~\autoref{lemma:error-expansion_app} implies:
    $\epsilon_{\gT}(h) \leq \epsilon_{\gS}(h) +\Delta(h, \Bar{h})$.
    
    
    

    We now define the following random variables:
    \begin{align*}
        r_{\gS \setminus \gD_\alpha,i} = 
        \begin{cases}
        0,& \Bar{h}(x_i) = y^*_i,\\
        \frac{1}{n_S},& \Bar{h}(x_i) = \hat{h}(x_i) \neq y^*_i,\\
        \frac{-1}{n_S},& \Bar{h}(x_i) \neq \hat{h}(x_i) = y^*_i , 
        \end{cases}
        \qquad
        r^\gS_{\gD_\alpha,i} = 
        \begin{cases}
        0,& \Bar{h}(x_i) = y^*_i,\\
        \frac{1}{n_S},& \Bar{h}(x_i) = \hat{h}(x_i) \neq y^*_i,\\
        \frac{-1}{n_S},& \Bar{h}(x_i) \neq \hat{h}(x_i) = y^*_i, 
        \end{cases}
    \end{align*}
    \begin{align*}
        r_{\gT \setminus \gD_\alpha, i} = \frac{\mathbf{1}\{\hat{h}(x_i) \neq \Bar{h}(x_i)\}}{n_T}
        \qquad
        r^\gT_{\gD_\alpha, i} = \frac{\mathbf{1}\{\hat{h}(x_i) \neq \Bar{h}(x_i)\}}{n_T}
    \end{align*}
    Now we have the following empirical sums:
    \begin{align*}
        \sum_{\gS \setminus \gD_\alpha} r_{\gS \setminus \gD_\alpha,i} = \frac{1}{n_\gS}\sum_{\gS \setminus \gD_\alpha}[\mathbf{1}\{\hat{h}(x_i) \neq y^*_i\} - \mathbf{1}\{\hat{h}(x_i) \neq \Bar{h}(x_i)\}] &= \epsilon^\gS_{\widehat{\gS \setminus \gD_\alpha}}(\hat{h}, y^*) - \epsilon^\gS_{\widehat{\gS \setminus \gD_\alpha}}(\hat{h}, \Bar{h}),\\
        \sum_{\gT \setminus \gD_\alpha} r_{\gT \setminus \gD_\alpha, i} =\frac{1}{n_\gT}\sum_{\gT \setminus \gD_\alpha}[\mathbf{1}\{\hat{h}(x_i) \neq \Bar{h}(x_i)\}] &= \epsilon^\gT_{\widehat{\gT \setminus \gD_\alpha}}(\hat{h}, \Bar{h}),\\
        \sum_{\gD_\alpha} r^\gS_{\gD_\alpha,i} = \frac{1}{n_\gS}\sum_{\gD_\alpha}[\mathbf{1}\{\hat{h}(x_i) \neq y_i\} - \mathbf{1}\{\hat{h}(x_i) \neq \Bar{h}(x_i)\}] &= \epsilon^\gS_{\hat{\gD_\alpha}}(\hat{h}, y^*) - \epsilon^\gS_{\hat{\gD_\alpha}}(\hat{h}, \Bar{h}),\\
        \sum_{\gD_\alpha} r^\gT_{\gD_\alpha, i} =\frac{1}{n_\gT}\sum_{\gD_\alpha}[\mathbf{1}\{\hat{h}(x_i) \neq \Bar{h}(x_i)\}] &= \epsilon^\gT_{\hat{\gD_\alpha}}(\hat{h}, \Bar{h}),
    \end{align*}
    and their corresponding population terms: 
    \begin{align*}
        \E_\gS\left[ \sum_{\gS \setminus \gD_\alpha} r_{\gS \setminus \gD_\alpha,i} \right] &= \epsilon^\gS_{\gS \setminus \gD_\alpha}(\hat{h}, y^*) - \epsilon^\gS_{\gS \setminus \gD_\alpha}(\hat{h}, \Bar{h}),\\
        \E_\gT\left[ \sum_{\gT \setminus \gD_\alpha} r_{\gT \setminus \gD_\alpha, i} \right] &= \epsilon^\gT_{\gT \setminus \gD_\alpha}(\hat{h}, \Bar{h}),\\
        \E_\gS\left[ \sum_{\gD_\alpha} r^\gS_{\gD_\alpha,i} \right] &= \epsilon^\gS_{\gD_\alpha}(\hat{h}, y^*) - \epsilon^\gS_{\gD_\alpha}(\hat{h}, \Bar{h}),\\
        \E_\gT\left[ \sum_{\gD_\alpha} r^\gT_{\gD_\alpha, i} \right] &= \epsilon^\gT_{\gD_\alpha}(\hat{h}, \Bar{h}).
    \end{align*}
    The sum of these terms
    \begin{align*}
        &= \epsilon^\gS_{\gS \setminus \gD_\alpha}(\hat{h}, y^*) - \epsilon^\gS_{\gS \setminus \gD_\alpha}(\hat{h}, \Bar{h}) + \epsilon^\gT_{\gT \setminus \gD_\alpha}(\hat{h}, \Bar{h}) + \epsilon^\gS_{\gD_\alpha}(\hat{h}, y^*) - \epsilon^\gS_{\gD_\alpha}(\hat{h}, \Bar{h}) + \epsilon^\gT_{\gD_\alpha}(\hat{h}, \Bar{h})\\
        &= (\epsilon^\gS_{\gS \setminus \gD_\alpha}(\hat{h}, y^*) + \epsilon^\gS_{\gD_\alpha}(\hat{h}, y^*)) + (\epsilon^\gT_{\gT \setminus \gD_\alpha}(\hat{h}, \Bar{h}) - \epsilon^\gS_{\gS \setminus \gD_\alpha}(\hat{h}, \Bar{h})) + (\epsilon^\gT_{\gD_\alpha}(\hat{h}, \Bar{h}) - \epsilon^\gS_{\gD_\alpha}(\hat{h}, \Bar{h}))\\
        &= \epsilon_{\gS}(\hat{h}, y^*) + \Delta(\hat{h}, \Bar{h}, \alpha) + (\epsilon^\gT_{\gD_\alpha}(\hat{h}, \Bar{h}) - \epsilon^\gS_{\gD_\alpha}(\hat{h}, \Bar{h}))\\
        &\leq \epsilon_{\gS}(\hat{h}, y^*_\gS) + \Delta(\hat{h}, \Bar{h}, \alpha) + \underline{\Delta}(\hat{h}, \Bar{h}, \alpha)\\
        &\leq \epsilon_{\gS}(\hat{h}, y^*_\gS) + \Delta(\hat{h}, \Bar{h}, \alpha) \qquad \left(\underline{\Delta}(\hat{h}, \Bar{h}, \alpha) \leq 0 \text{ from~\autoref{ass:overlap}.}\right)
    \end{align*}
    Now, applying Hoeffding's inequality completes the proof.

\end{proof}

\paragraph{Remark:}~\autoref{ass:overlap} bounds the target risk only in terms of non-overlapping disagreement discrepancy. It may be counter-intuitive to think that target disagreement is bounded by source disagreement. However, we have shown that $\Delta(\hat{h}, \bar{h}, \alpha)$ accounts for nearly all of $\Delta(\hat{h}, \bar{h})$ (\autoref{fig:synthetic_main}) for $\bar{h}$ found through either \disdis or \odd. This implies that even if the target disagreement exceeds source disagreement, it doesn't do it by a lot. Therefore, even without~\autoref{ass:overlap}, the $\underline{\Delta}$ term from~\autoref{thm:splitbound} should be expected to be negligible and inconsequential in most practical cases.

\section{Experiment Details}
\label{app:expdetails}

\subsection{Dataset details}
\paragraph{Synthetic Datasets:}
The Gaussians are randomly initialized with varying means and covariance matrices. The target gaussian is brought close to the source gaussian by translating its mean using $\mu_\gT \gets \mu_\gT + (\mu_\gS - \mu_\gT) * \text{overlap factor}$. For a point $X = (x_1, x_2)$, its class label ($y^*$) is decided using a complex function:  
\[
y^*(X) =
\begin{cases} 
  0, & \text{if } x_1 \leq \cos(a\sin(bx_2) + ce^{dx_2} + \frac{x_2^2 + 2x_2 - 5}{2}) + \epsilon \\
  1, & \text{otherwise}
\end{cases}
\]
where, $a, b, c, d$ are chosen randomly from a small range and $\epsilon$ is small random noise to make the decision boundary noisy. This allowed for simple as well as extremely complex decision surfaces, making for a suitable test bed for our method and its comparison with \disdis. We show 5 more samples of these datasets in~\autoref{fig:synth_samples}.

\begin{figure}[h]
    \centering
    \includegraphics[width=0.19\linewidth]{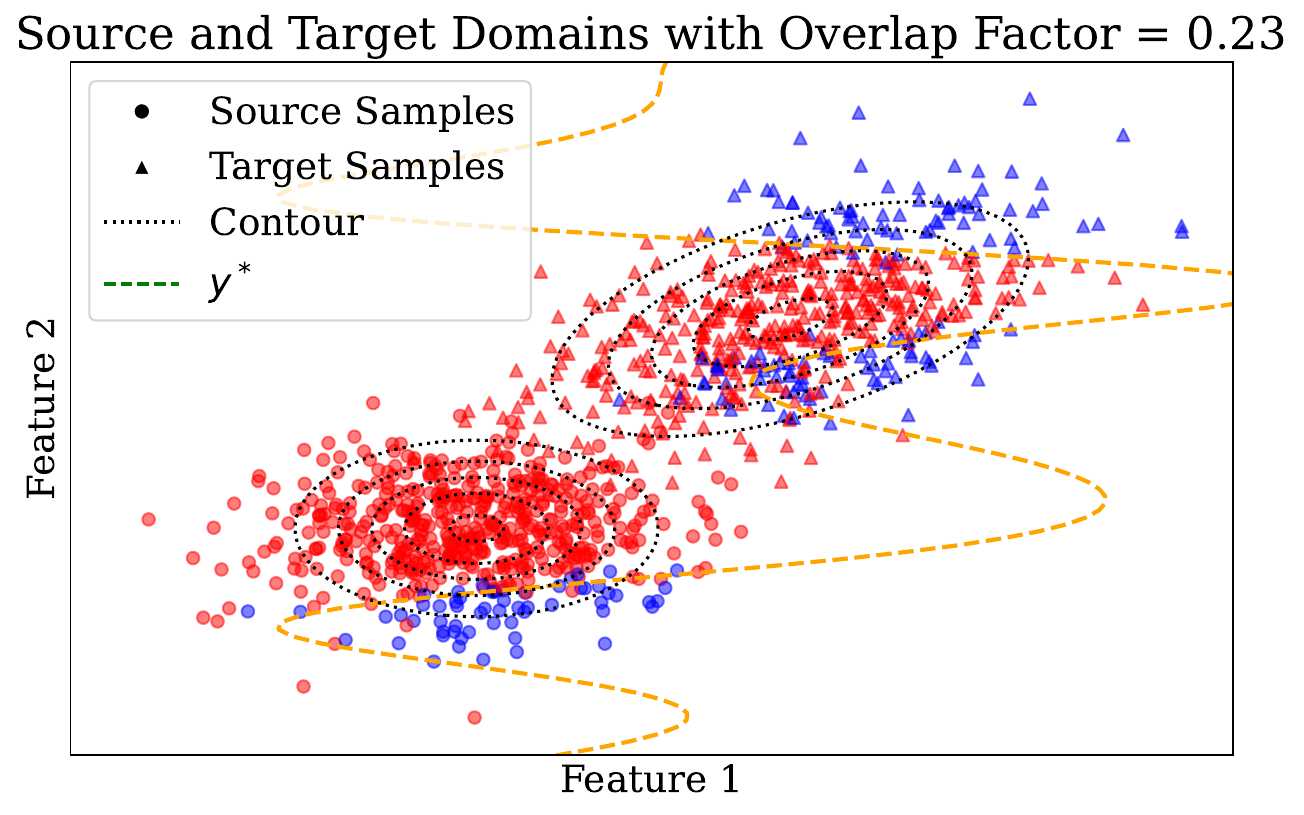}
    \includegraphics[width=0.19\linewidth]{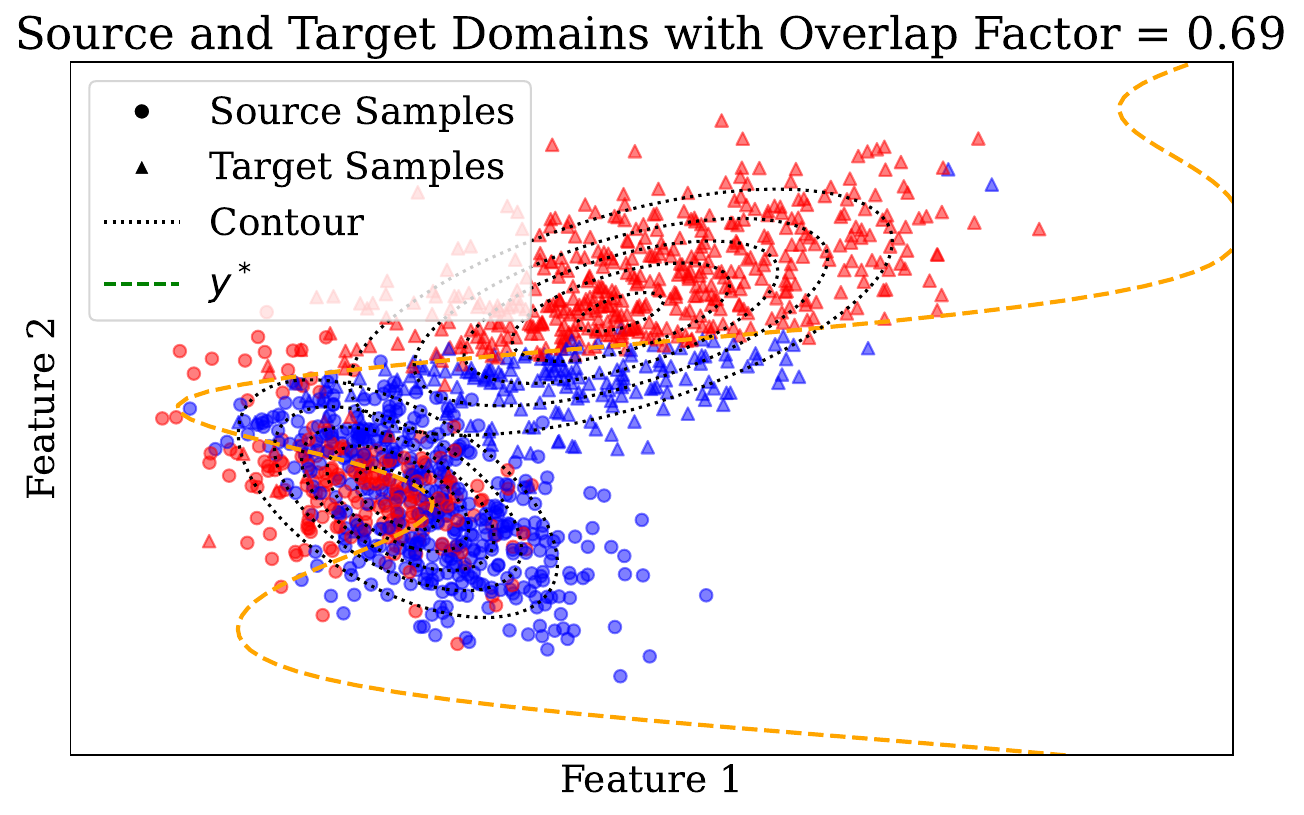}
    \includegraphics[width=0.19\linewidth]{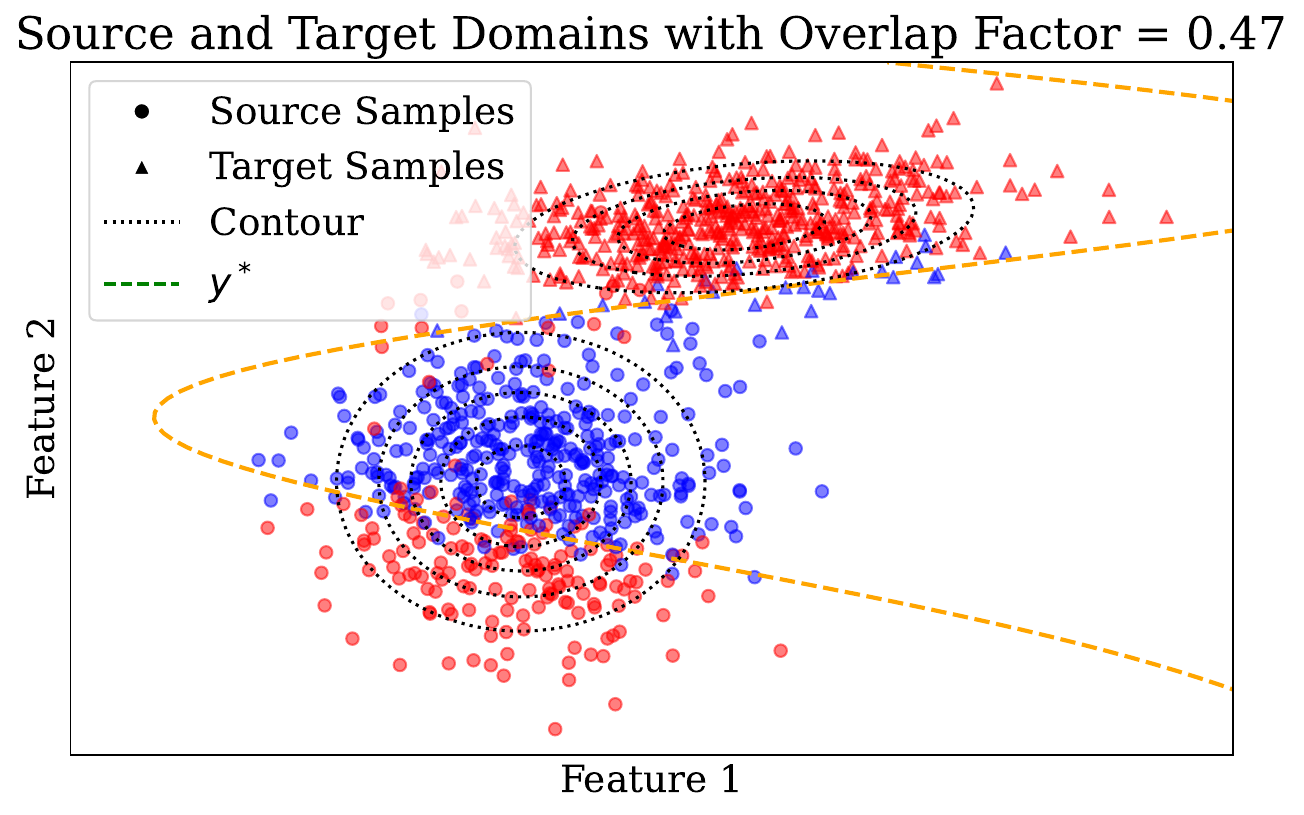}
    \includegraphics[width=0.19\linewidth]{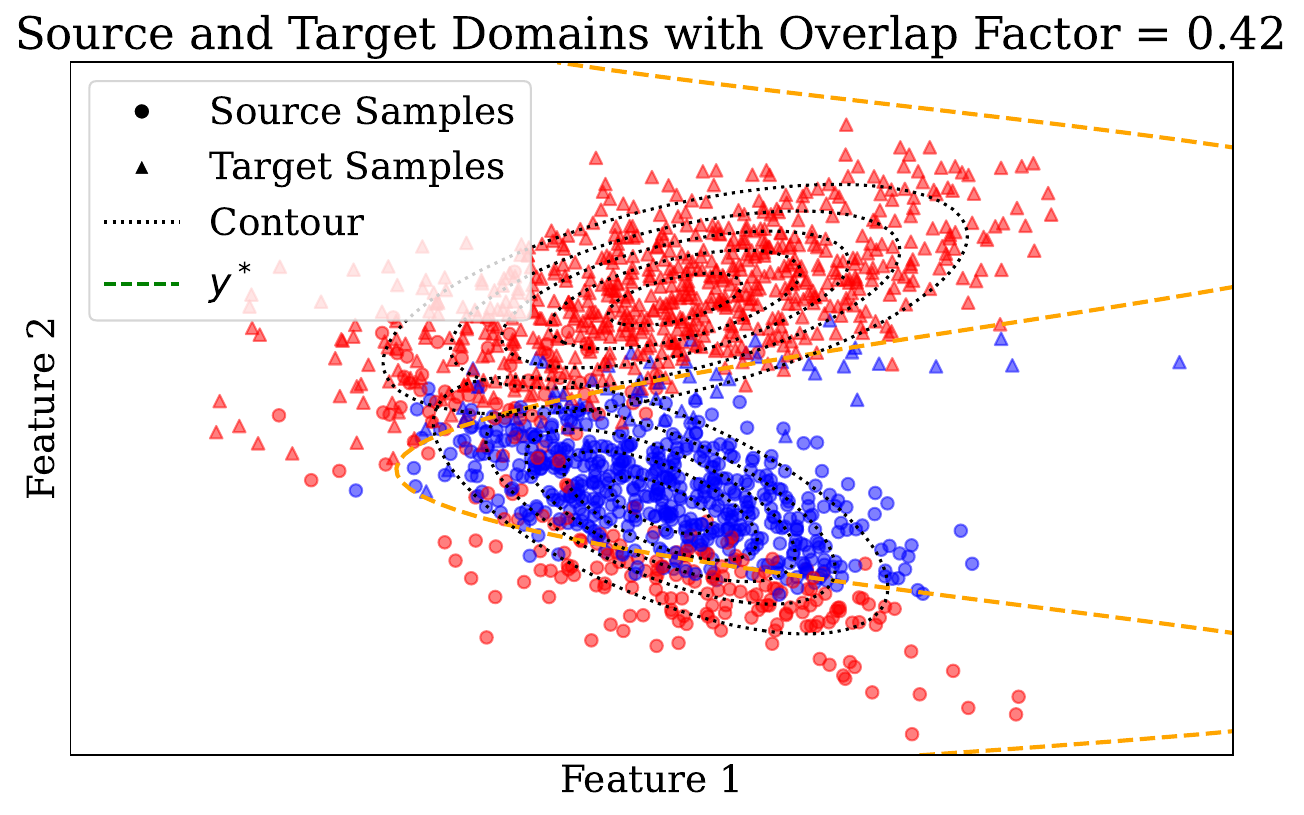}
    \includegraphics[width=0.19\linewidth]{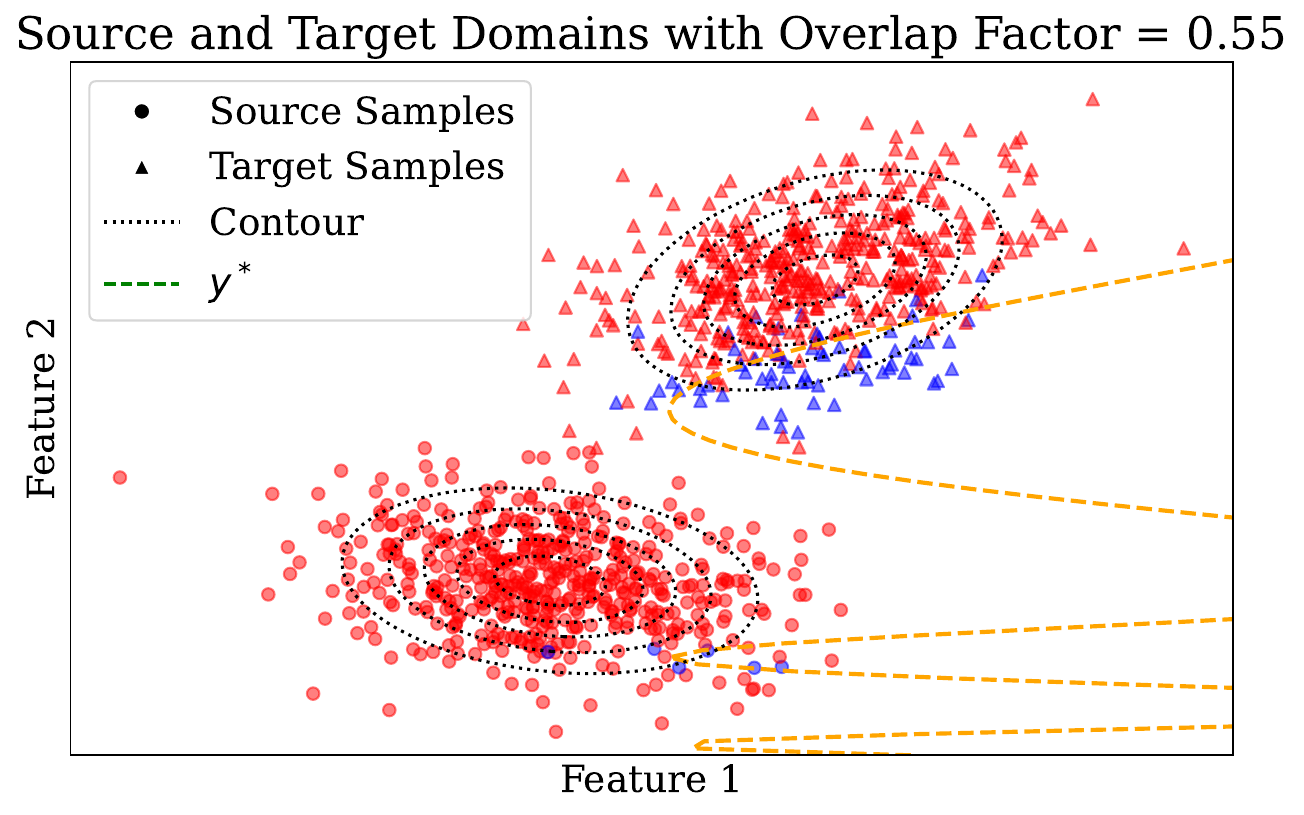}
    \caption{Variations in synthetic datasets.}
    \label{fig:synth_samples}
\end{figure}

\paragraph{Real Datasets:}
We use the datasets provided by~\citet{rosenfeld2023almost} on their \href{https://github.com/erosenfeld/disagree_discrep/tree/master}{github repository}. This includes CIFAR10  [4 shift variations], CIFAR100 [4 shift variations], DomainNet [3 shift variations], Entity13 [3 shift variations], Entity30 [3 shift variations], FMoW [2 shift variations], Living17 [3 shift variations], NonLiving26 [3 shift variations], OfficeHome [3 shift variations] and Visda [2 shift variations]. All datasets were trained with different deep neural network backbones as described in the \disdis paper (Appendix A) with ERM, DANN, CDANN, BN-adapt and FixMatch methods. The deep representations of the trained networks are provided directly for download making replication seamless. 

\paragraph{CivilComments:} We used CivilComments~\cite{borkan2019nuanced} to test if our method improves over \disdis in natural language domain, which it had not been tested on before. This dataset contains sentences labeled \textit{toxic} or not. We created source and target domains using subpopulation shift in this dataset. We filtered all rows with the \textit{black} label True into one set (target) and the rest in another (source), and evenly balanced the dataset to avoid class imbalance issues in evaluation. Train Set: $\sim$ 27k samples in each class, Validation Set: $\sim$ 4.5k samples in each class.  We collected the BERT~\citep{devlin2018bert} embeddings of all sentences in the source and target domain and trained a linear model on the source domain, which achieved a source validation accuracy of $\sim75\%$. The source model achieved a target validation accuracy of $\sim65\%$, which would be the target to predict by our bound. We found that the \disdis-based bound predicted an accuracy of $\sim58\%$, while \odd improved it to $\sim63\%$.

\subsection{More experimental details}
\paragraph{Training domain classifiers:}  
For each experiment: (dataset, shift, training method) combination, we train a small 3-layer MLP, with number of neurons $=$ the dimensionality of the representation. We train the network on a balanced training set consisting of equal number of samples from both source and target domains. We randomly subsample the majority domain to make both classes balanced to not skew the classifier towards one of the classes. We train the classifier with Adam Optimizer, with learning rate $=10^{-4}$, until convergence (difference in loss $< 10^{-4}$ for $10$ epochs).

\subsection{More experimental results}

\paragraph{Results on non-DA algorithms:} It is noticeable that the MAE for domain-adversarial algorithm based representations is much better than the other. We plot the point-wise bound estimates (using logits) for all DA-method predictions in the appendix~\autoref{fig:real_logits_DA} and find that like \disdis, \odd overestimates accuracy in many cases. This is expected as explained in~\autoref{para:da}, but our method still maintains a significantly higher coverage compared to baselines.


\begin{figure}[b]
    \centering
    \includegraphics[width=0.7\linewidth]{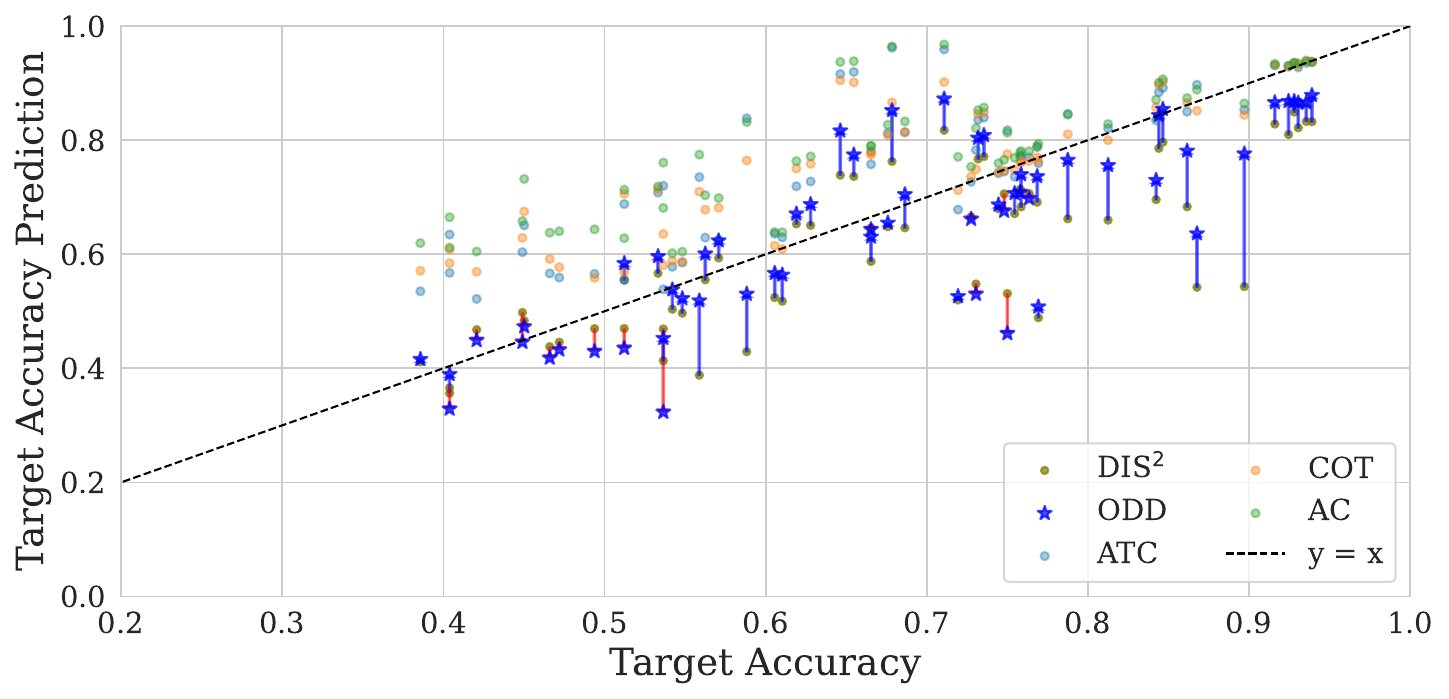}
    \caption{Like \disdis, \textbf{DA?}\ding{55} based predictions overestimate accuracy in many cases but still by a smaller margin compared to other baselines.}
    \label{fig:real_logits_DA}
\end{figure}


\paragraph{Discounting loss of overlapping source samples:}We ran an additional experiment on the real datasets where we weighted the source samples in addition to the target samples while optimizing for the critic. We found no significant difference in MAE under this setting (MAE without source weighting: 0.1224, MAE with source weighting: 0.1244). This is probably because we are searching in the simple space of linear models, and ignoring a few samples does not impact the selection of the critic much on average. 

\paragraph{Measuring quality of overlap estimation:} To measure the quality of overlap estimation, we split the source and target data (of a few studied datasets) into train and dev sets, and early stopped the domain classifier training if the dev set loss did not fall for 10 iterations in a row. As we do not have access to true densities, the dev set performance acts as a proxy for the quality of overlap estimation.  We tried multiple sized MLPs and found that the mean dev set accuracy across datasets was almost the same (apart from the linear classifier, which probably underfits in some cases). For details, see~\autoref{tab:dcab} where we also measure the expected calibration error (ECE)~\citep{guo2017calibration} of the model. Although we could improve ODD-based bounds by tuning the classifier’s hyperparameters for each individual dataset, we used a single setting (3 layer deep, num\_features wide network) for all experiments to reflect the robustness of our approach. 

\begin{table}[h]
\centering
\caption{We find that training domain classifiers on representations is fairly robust to the model architecture.}
\resizebox{0.8\textwidth}{!}{%
\begin{tabular}{ccccccc}
\textbf{num\_layers} & \textbf{width}     & \textbf{Accuracy (mean)} & \textbf{Accuracy (std)} & \textbf{ECE (mean)} & \textbf{ECE (std)} & \textbf{Final MAE} \\
1                    & -                  & 0.66                     & 0.15                    & 0.09                & 0.08               & 0.1234             \\
2                    & num\_features      & 0.71                     & 0.16                    & 0.11                & 0.09               & 0.1269             \\
2                    & num\_features // 2 & 0.69                     & 0.17                    & 0.10                & 0.09               & 0.1236             \\
3                    & num\_features      & 0.71                     & 0.15                    & 0.15                & 0.10               & 0.1223             \\
3                    & num\_features // 2 & 0.70                     & 0.16                    & 0.14                & 0.09               & 0.1282             \\
4                    & num\_features      & 0.70                     & 0.16                    & 0.19                & 0.12               & 0.1229             \\
4                    & num\_features // 2 & 0.69                     & 0.16                    & 0.18                & 0.11               & 0.1255            
\end{tabular}%
}
\label{tab:dcab}
\end{table}




\end{document}